\documentclass[11pt]{amsart}

\usepackage[a4paper,total={6in,10in}]{geometry}
\usepackage[inline]{enumitem}
\usepackage[utf8]{inputenc}
\usepackage[foot]{amsaddr}
\usepackage{amsmath,amsfonts,amsthm,amssymb}
\usepackage[ruled,vlined]{algorithm2e}
\usepackage{graphicx}
\usepackage{xcolor}
\usepackage{booktabs}
\usepackage[sortcites=true,backref=true]{biblatex}
\addbibresource{refs.bib}

\definecolor{red}{RGB}{200,16,46}

\usepackage[colorlinks=true,
linkcolor=red,
citecolor=red]{hyperref}

\graphicspath{{figures}}

\newtheorem*{theorem*}{Theorem}
\newtheorem{theorem}{Theorem}

\newtheorem{lemma}{Lemma}

\newcommand{\defeq}{\ensuremath{\mathrel{\mathop:}=}}

\newcommand{\mli}[1]{\mathit{#1}}
\DeclareMathOperator*{\minopt}{minimize}
\DeclareMathOperator*{\argmin}{argmin}

\newcommand\secref[1]{\S\ref{#1}}
\newcommand\figref[1]{Fig.~\ref{#1}}
\newcommand\thmref[1]{Thm.~\ref{#1}}
\newcommand\tabref[1]{Tab.~\ref{#1}}

\title{Automatic Classification of Deformable Shapes}
\author[Hossein Dabirian et al.]{H. Dabirian$^{1}$ \and R. Sultamuratov$^{2}$ \and J. Herring$^{2,4}$ \and C. El Tallawi$^{3}$ \and W. Zoghbi$^{3}$ \and A. Mang$^{2}$ \and R. Azencott$^{2}$}

\address[1]{Department of Electrical Engineering and Computer Science, University of Michigan, Ann Arbor, MI, USA}
\address[2]{Department of Mathematics, University of Houston, Houston, TX, USA (Email correspondences: \href{mailto:razencot@math.uh.edu}{razencot@math.uh.edu})}
\address[3]{Houston Methodist DeBakey Heart and Vascular Center, Houston Methodist Hospital, Houston, TX, USA}
\address[4]{Current Affiliation: Slingshot Aerospace, El Segundo, CA, USA}
\date{\today}

\begin{document}

\begin{abstract}
Let $\mathcal{D}$ be a dataset of smooth {3D}-surfaces, partitioned into  disjoint classes $\mli{CL}_j$, $j= 1, \ldots, k$. We show how \emph{optimized diffeomorphic registration} applied to large numbers of pairs $S,S' \in \mathcal{D}$ can provide descriptive feature vectors to implement automatic classification on $\mathcal{D}$, and generate classifiers invariant by rigid motions in $\mathbb{R}^3$. To enhance accuracy of automatic classification, we enrich the smallest classes $\mli{CL}_j$ by diffeomorphic interpolation of smooth surfaces between pairs $S,S' \in \mli{CL}_j$. We also implement small random perturbations of surfaces $S\in \mli{CL}_j$ by random flows of smooth diffeomorphisms $F_t:\mathbb{R}^3 \to \mathbb{R}^3$. Finally, we test our automatic classification methods on a cardiology data base of discretized mitral valve surfaces.
\end{abstract}

\maketitle

\section{Introduction}\label{intro}

In the past two decades, \emph{computational anatomy}~\cite{Grenander:1998a,Miller:2004a,Miller:2002a,Younes:2009a} has developed variational calculus algorithms and associated software tools for optimized diffeomorphic registration of deformable 3D shapes (volumes or surfaces). These techniques have been applied to many bio-medical datasets, mostly for quantified comparison of ``soft'' human organs across patients cohorts, or across time for specific patients~\cite{Hartman:2022a,Hsieh:2022a}. Diffeomorphic registration has been successfully applied to compare human brains across large sets of 3D-MRI brain images~\cite{Miller:2001a,Glaunes:2008a,Lee:2020a,Mussabayeva:2018a,Louis:2018a}. Various flavors of diffeomorphic registration approaches have also been used in cardiology to study image sequences of live beating hearts, for dynamic modeling of heart chambers, ventricle walls fibers, etc. (see, e.g., \cite{Bistoquet:2008a,Gorce:1996a,Mansi:2011a,Sundar:2009a,Delingette:2012a,Lombaert:2011a,Vadakkumpadan:2012a,Perperidis:2005a,Bai:2015a,Shen:2005a,Guigui:2022a}). Similarly, statistical shape models have been developed to not only capture shape variations of anatomical structures within a population but also aid image segmentation (see \cite{Ludke:2022a,Heimann:2009a,Ambellan:2019a,Davies:2002a,Davies:2002b}).

Co-authors of the present paper have previously applied diffeomorphic registration to a data base of live 3D-echocardiography of mitral valves ({\bf MV}), to reconstruct MV dynamics and compute strain distribution over mitral leaflets (see \cite{Azencott:2010a,Freeman:2014a,Jajoo:2011a,Zhang:2021a,ElTallawi:2021a,ElTallawi:2021b,ElTallawi:2019a}). The underlying database of smooth 3D-surfaces was cured and provided by Drs Carlos El Tallawi and William Zoghbi (Houston Methodist DeBakey Heart and Vascular Center). In the present paper, we use it as a benchmark to test the performances of the methodologies we develop here for automatic classification of smooth 3D-surfaces. More precisely, we will address the task of automatic discrimination between ``regurgitation'' MVs and ``normal'' MVs  by machine learning ({\bf ML}) via random forests ({\bf RF})~\cite{Cutler:2012a,Breiman:2001a}.

\subsection{Optimized Diffeomorphic Registration of 3D Surfaces}

Lucid monographs and reviews about (diffeomorphic) registration and shape matching are~\cite{Modersitzki:2004a,Sotiras:2013a,Modersitzki:2009a,Younes:2019a}. In this paper, the term ``smooth'' is shorthand for ``of class $C^{\infty}$.'' Let $\mathcal{S}$ be the set of all compact smooth 3D-surfaces properly embedded in $\mathbb{R}^3$, and having piecewise smooth boundaries. Given two surfaces $S,S'\in\mathcal{S}$, denote $\mathcal{F}(S,S')$ the set of time indexed flows $(F_t)_{t\geq 0}$ of smooth diffeomorphisms $F_t : \mathbb{R}^3 \to \mathbb{R}^3$ verifying $F_1(S) = S'$ and $F_0 = \operatorname{id}_{\mathbb{R}^3}$, where $\operatorname{id}_{\mathbb{R}^3} (x) = x$ denotes the identity map in $\mathbb{R}^3$. Let \[\int_0^1 \| \mathrm{d} F_t / \mathrm{d}t \|^2 \text{d}t\] be the \emph{kinetic energy} of the flow $(F_t)_{t\geq 0}$. Here, $\|w\|$ is a fixed Hilbert-norm for smooth vector fields $w$ on $\mathbb{R}^3$. Numerical minimization of the kinetic energy over all flows in $\mathcal{F}(S,S')$ has been implemented by several calculus of variations algorithms (see, e.g.,~\cite{Azencott:2010a,Beg:2005a,Freeman:2014a,Jajoo:2011a,Zhang:2021a,Polzin:2020a}; or automatic differentiation (see, e.g., \cite{Hsieh:2021a,Franccois:2021a,Bone:2018a})) and typically yields an (approximately) optimal flow $(F_t)$ having a nearly minimal kinetic energy denoted $\operatorname{KIN}(S,S')$. Then, the terminal $F_1$ is called an optimized diffeomorphic registration of $S$ onto $S'$. Aside from solving variational optimization problems, some recent work tries to estimate the diffeomorphic matching using ML techniques~\cite{Amor:2021a,Krebs:2019a,Sun:2022a,Yang:2017a,Bharati:2022a,Huang:2021a}. One key issue in this context is how these methods generalize to unseen data. Moreover, it has recently been shown that efficient hardware-accelerated implementations of variational methods~\cite{Brunn:2021a,Brunn:2020a,Himthani:2022a,Mang:2016b,Mang:2019a} yield runtimes that are almost en par with ML approaches.

\subsection{Automatic Classification of Smooth 3D-surfaces by Machine Learning}

In 3D medical imaging sequences, segmentation algorithms are often used to extract discretized smooth surfaces $S^1, \ldots, S^r \in \mathcal{S}$ modeling the outer surface of soft organs---such as human brains or hearts---as well as the boundaries of their internal chambers, cavities, ventricles, etc. To quantitatively compare these bio-medical soft 3D shapes across patients or across time, diffeomorphic registration has often been intensively and successfully used~\cite{Bauer:2022a,Bone:2020a,Krebs:2019a,Charon:2022a,Louis:2018a}.

The \textbf{main goal of this paper} is to explore how to efficiently combine diffeomorphic registration of smooth surfaces with supervised ML for automatic classification of smooth 3D-surfaces. Let $\mathcal{D}$ be a dataset of discretized smooth 3D-surfaces. When $\mathcal{D}$ is already partitioned into a finite number of disjoint classes, automatic class prediction by supervised ML is a natural goal. However, classical ML classifiers such as \emph{random forests} ({\bf RF})~\cite{Cutler:2012a,Breiman:2001a}, \emph{multi-layer perceptrons} ({\bf MLP}s)~\cite{Rosenblatt:1957a}, \emph{convolution neural networks} ({\bf CNN}s)~\cite{Gu:2018a}, or \emph{support vector machines} ({\bf SVM}s)~\cite{Cortes:1995a} require the characterization of every surface $S \in \mathcal{D}$ by a computable feature  vector $\operatorname{vec}(S) \in \mathbb{R}^N$, for some fixed $N\in\mathbb{N}$. We provide an innovative approach to construct feature vectors $\operatorname{vec}(S)$ boosting the accuracy of automatic classification for smooth surfaces. Note that for a surface $S$ discretized by a grid $[x_1 , \ldots, x_{n(S)}]$ of $n(S) \in \mathbb{N}$ points in $\mathbb{R}^3$, naive indexation of $S$ by the long vector $X(S) \in \mathbb{R}^{3n(S)}$ concatenating the $x_j$ has major flaws: Indeed, $X(S)$ does not necessarily have a fixed dimension, and randomly permuting the $x_j$ will still identify the same $S$ but yield a very different indexing vector $X'(S)$.

Generic descriptors for shape recognition include Fourier descriptors~\cite{Osowski:2002a}, geodesic moments~\cite{Luciano:2019a}, curvature information~\cite{Wu:1993a}, multiscale fractal dimensions~\cite{Torres:2004a,Plotze:2005a}, or local contour signatures~\cite{Junior:2018a}. Here, we generate several families of intrinsic feature vectors $\operatorname{vec}(S)$ invariant by rigid motions of $\mathbb{R}^3$. To this end, we first construct a family $\Delta$ of dissimilarities $\delta(S,S')$ computable for all pairs $(S,S')$ of smooth 3D-surfaces. All these dissimilarities are invariant by rigid motions of surfaces, and they are easily computed after numerical diffeomorphic registration $F_t : S \to S'$ with minimal kinetic energy  $\operatorname{KIN}(S,S')$. For this paper, the family $\Delta$ includes only the kinetic energy and all the quantiles $\mli{QUANT}_{\!\alpha}(S,S')$ of the isotropic strain values for the elastic deformation $F_t$. Geometric transformations of $S$ and $S'$ before diffeomorphic registration provide invariance of each dissimilarity $\delta \in \Delta$ under the group $G3$ generated by rotations, translations, and homotheties of $\mathbb{R}^3$. To generate natural feature vectors $\operatorname{vec}(S) \in \mathbb{R}^N$ invariant by the group $G3$, we first select a finite ``reference set'' $\mli{REF}$ of $r$ surfaces in $\mathcal{D}$ and a finite set $\mli{DIS} \subset \Delta$ of $s$ dissimilarities. We then set $N = r\times s$,  and define $\operatorname{vec}(S) \in \mathbb{R}^N$ as the vector with coordinates $\delta(S,\Sigma)$, where $\delta \in \mli{DIS}$ and $\Sigma\in \mli{REF}$.

For most ML classifiers, unbalanced  class sizes do degrade classification accuracy. To enrich any given class $\mli{CL}$ of smooth surfaces, we propose here a diffeomorphic interpolation algorithm: For any pair of surfaces $S,S' \in \mli{CL}$ optimized diffeomorphic registration of $S$ and $S'$ generates a continuous time indexed flow $(F_t)_{t\geq0}$ of diffeomorphisms with $F_0(S)=S$ and $F_1(S) =S'$. When the kinetic energy $\operatorname{KIN}(S,S')$ is small enough, the smooth surfaces $S_t = F_t(S)$ can be added to class $\mli{CL}$ as virtual new cases.

For ML classifiers, robustness is often enhanced when one enriches the training set by small random perturbations of existing training cases. To apply this approach to datasets of smooth surfaces, we generate continuous time flows $F_t$ of \emph{random smooth $R^3$ diffeomorphisms},  by time integration  of \emph{smooth} Gaussian random vector fields $V_t(x)$ indexed by time $t \geq 0$ and  $ x \in \mathbb{R}^3$. For small $t$, the smooth surfaces $F_t(S)$ are the small random deformations of $S$. Our numerical implementation involves a stochastic series expansion of smooth Gaussian vector fields  on $\mathbb{R}^3$, which required a sophisticated analysis of convergence speed.

\subsection{Contributions}

Our current work builds upon our prior work on designing effective algorithms for diffeomorphic shape matching and algorithmic reconstruction of unknown dynamics for MV leaflets~\cite{Azencott:2010a,Freeman:2014a,Jajoo:2011a,Zhang:2021a,ElTallawi:2021a,ElTallawi:2021b,ElTallawi:2019a}. Here, we are interested in developing a fully automatic computational framework for classification of deformable shapes, using tools derived from diffeomorphic registration of smooth 3D shapes~\cite{Bauer:2022a,Younes:2009a,Younes:2019a,Charon:2022a,Marslanda:2019a,Bauer:2019a}. Our main contributions are the following:
\begin{itemize}
\item We use diffeomorphic registration between pairs $S,S'$ of smooth surfaces to compute several generic families of dissimilarities $\operatorname{dis}(S,S')$ invariant by the group $G3$ generated by rigid motions and homotheties in $\mathbb{R}^3$.
\item For automatic classification of generic datasets of smooth 3D-surfaces, we construct large sets of $G3$-invariant feature vectors derived from the $G3$-invariant dissimilarities $\operatorname{dis}(S,S')$.
\item To re-balance class sizes in generic databases of smooth 3D-surfaces, we design two novel data enrichment approaches derived from diffeomorphic registrations. One approach is based on diffeomorphic shape interpolation, in order to extend the SMOTE enrichment technique based on linear interpolation of feature vectors, which is restricted to Euclidean distances. The second approach uses random diffeomorphic shape perturbations by automated simulations of smooth Gaussian random vector fields in $\mathbb{R}^3$.
\item For automatic classification in our benchmark dataset of MV surfaces, we  develop a localization scheme for dissimilarity computations, and tailor it to our discretized MV surfaces, in order to improve the discriminating powers of our feature vectors.
\item By implementing multiple RF classifiers based on our $G3$-invariant feature vectors, we perform a comparative importance analysis between nine groups of features to discover the most important of our dissimilarities, which turn out to be high quantiles of strain distributions.
\item We successfully apply all our preceding approaches to perform automatic RF-classi\-fi\-cation of 800 MV surfaces into two classes (``regurgitation'' vs ``normal'' cases), with high OOB accuracy.
\end{itemize}

\subsection{Outline}

In \secref{s:methods} we outline our methodology. We describe our benchmark dataset of smooth surfaces in \secref{s:data}. Our approach to diffeomorphic shape matching is described in \secref{s:diffreg}, where we formulate diffeomorphic registration as a classical variational problem, and the algorithmic approach we have implemented in our automatic solver for diffeomorphic surface registration. Going beyond the minimal kinetic energy, we outline the families of $G3$-invariant surface dissimilarities, which we derive from diffeomorphic registration. In \secref{s:autoclass}, we develop intrinsic families of $G3$-invariant feature vectors for automatic classification of 3D-surfaces. In \secref{s:enrichment} we outline two algorithms to enrich the classes of discretized 3D-surfaces—diffeomorphic shape interpolation and random diffeomorphic shape perturbations. In \secref{s:results} we discuss our experimental setup and present results for our benchmark dataset of MV surfaces, which includes the overall classification approach (see \secref{s:results-classification}), classification through RFs (see \secref{s:results-rfclassification}), as well as computing times (see \secref{s:computing-times}). We draw conclusions in \secref{s:conclusions}.

\section{Methods and Material}\label{s:methods}

\subsection{3D-Echocardiographies of Human Mitral Valves}\label{s:data}

In a long term collaboration between the research groups of William Zoghbi, MD, Methodist Hospital (DeBakey Heart and Vascular Center) and Robert Azencott, University of Houston (Mathematics), our team has studied echocardiographic images acquired in vivo from 150 cardiology patients with potential MV complications. For these  patients, transesophageal echocardiography provides dynamic 3D views of their MVs, at rates of roughly 25 frames per heart cycle. Fast computerized segmentation of these 3D-images, a TOMTEC-Philips software extracts a 3D-snapshot of the two MV leaflets surfaces per 3D-frame. For frame time $t$, the extracted 3D-snapshot displays the anterior ({\bf AL}) and posterior leaflets ({\bf PL}), denoted by $\mli{AL}(t)$ and $\mli{PL}(t)$, of the MV as smooth surfaces discretized by a dense grid of 800 points per leaflet. This data base of 3\,500 discretized 3D-snapshots of MV leaflets was prepared and annotated by Dr. El-Tallawi, Methodist Hospital (DeBakey Heart and Vascular Center).

During each heart cycle, the leaflets $\mli{AL}(t)$ and $\mli{PL}(t)$ close the MV at midsystole ($t = t_{\text{midsys}}$) and open the MV at endsystole ($t = t_{\text{endsys}}$). These two leaflets define a deformable connected 3D-surface $\mli{MV}(t)$ bounded by a flexible ring (the ``annulus''). At $t = t_{\text{midsys}}$, the leaflets $\mli{AL}(t)$ and $\mli{PL}(t)$ are in full contact along the \emph{coaptation line} in order to tightly close the MV. The leaflets open the MV progressively until endsystole, then remains fully open during diastole, and start closing again at beginning of systole. When the MV is open, the coaptation line is split into two curved boundary segments $\partial \mli{AL}(t)$, $\partial \mli{PL}(t)$, sharing the same endpoints. We show representative patient data in \figref{f:mvdata}.

\begin{figure}
\includegraphics[width=0.9\textwidth]{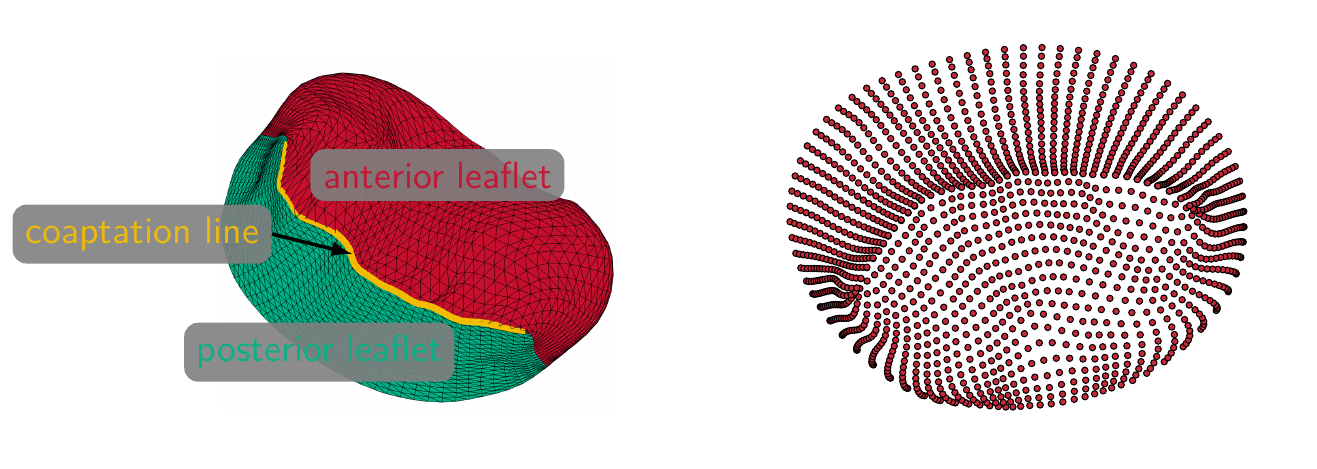}
\caption{Left: Anatomical regions for MV dataset. We highlight the posterior leaflet (teal color), the anterior leaflet (red color), and the coaptation line (gold). On the right we show the discretized shape $S$ represented by a grid $[x_1 , \ldots, x_{n(S)}]$ of $n(S) \in \mathbb{N}$ points in $\mathbb{R}^3$.\label{f:mvdata}}
\end{figure}

In previous studies~\cite{Azencott:2010a,Zhang:2021a,ElTallawi:2021a,ElTallawi:2021b,ElTallawi:2019a}, automatic \emph{diffeomorphic registration} of the MV leaflets between $t = t_{\text{midsys}}$ and $t = t_{\text{endsys}}$ was developed and systematically implemented for this data base of 3D-image sequences, in order to compute the intensities and spatial distribution of the tissue strain induced by MV deformation at each heartbeat. This first research project aimed to provide cardiologists with patient specific displays of MV leaflet strain intensities, as a potential aid to evaluate/compare MV clinical cases~\cite{ElTallawi:2021a,ElTallawi:2021b,ElTallawi:2019a}. Our diffeomorphic registration solver was installed and tested at Methodist Hospital (DeBakey Heart and Vascular Center). Our current version of the solver performs these diffeomorphic registrations in less than 2\,min per patient on a standard laptop (Matlab implementation), for pairs of surfaces discretized by 1\,600 points each. In the present paper, we develop diffeomorphic deformation techniques for automatic classification of soft smooth shapes. We have used our database of 3D MV snapshots as a benchmark to implement and test our approach to automatic classification of smooth surfaces, by applying it to automatic discrimination between two classes of patients, namely \emph{regurgitation} cases versus \emph{normal} cases.

\subsection{Diffeomorphic Registration of 3D-Surfaces}\label{s:diffreg}

Below, we describe our approach for diffeomorphic shape matching.

\subsubsection{Diffeomorphic Deformations in $\mathbb{R}^3$}

Recall the basic mathematical formalization of computational anatomy~\cite{Beg:2005a,Younes:2019a,Grenander:1998a,Miller:2004a}. Fix a scale parameter $s>0$ and let $K_s(x,y)$ be the positive definite radial kernel
\begin{equation}\label{e:kernelKs}
K_s(x,y)=\exp(-\|x-y\|^2 / s^2) \;\;\text{for all} \;\; x,y \in \mathbb{R}^3.
\end{equation}

We call any smooth map $x \mapsto w_x$ from $\mathbb{R}^3$ to $\mathbb{R}^3$ such that $w_x$ and all its derivatives tend to 0 as $\|x\| \to \infty$ a \emph{smooth vector field} $w$ on $\mathbb{R}^3$. For any such $w$, define the norm $\| w \|$ by
\begin{equation}\label{e:normKs}
\|w\|^2 = \int_{\mathbb{R}^3} \int_{\mathbb{R}^3} K_s(x,y) \langle w_x,w_y \rangle_{\mathbb{R}^3}\, \mathrm{d}x \,\mathrm{d}y.
\end{equation}

\noindent Endowed with this norm, the vector space $V$ of smooth vector fields becomes a Hilbert space. Call \emph{velocity flow} any set $v = (v_t)$ of time-indexed smooth vector fields $v_t \in V$ such that $t \mapsto v_t$ is a Lipschitz continuous map from $[0,1]$ into $V$. Denote $\mathcal{V}$ the Hilbert space of all velocity flows $v = (v_t)$ having finite \emph{kinetic energy} $\operatorname{kin}(v)$  defined by
\[
\operatorname{kin}(v) = \int_0^1 \| v_t \|^2 \,\mathrm{d}t.
\]

Call \emph{smooth deformation} of $\mathbb{R}^3$ any time indexed flow $(F_t)$, $t \in [0, 1]$, of smooth diffeomorphisms $F_t$ from $\mathbb{R}^3$ to $\mathbb{R}^3$, such that $F_0$ is the identity map $\text{id}_{\mathbb{R}^3} : \mathbb{R}^3 \to \mathbb{R}^3$. As shown in~\cite{Dupuis:1998a,Trouve:1998a}, for any velocity flow $v= (v_t)$ in $\mathcal{V}$ there is a unique smooth deformation $(F_t)$ of $\mathbb{R}^3$ solving the ordinary differential equation ({\bf ODE})
\[
\frac{\mathrm{d}F_t}{\mathrm{d}t} = v_t(F_t) \;\; \text{for almost all} \;\; t \in [0,1].
\]

\noindent Then, $\text{kin}(v)$ will also be called the kinetic energy of the smooth deformation $(F_t)$.

\subsubsection{Diffeomorphic Registration of Surfaces}

To compare two surfaces $S$ and $\Sigma$ in $\mathcal{S}$, one seeks a smooth deformation $(F_t)$ having \emph{minimal kinetic energy} among all deformations verifying $F_1(S) = \Sigma$. This requires finding a velocity flow $v = (v_t)$ in $\mathcal{V}$ and an associated smooth deformation flow $(F_t)$ solving the variational problem
\begin{subequations}\label{e:constopt}
\begin{equation}
\minopt_{v \in \mathcal{V},\; (F_t) \in \mathcal{F}} \;\;\text{kin}(v) \\
\end{equation}

\noindent under the nonlinear constraints
\begin{align}
\frac{\mathrm{d}F_t}{\mathrm{d}t} & = v_t(F_t) \;\; \text{for almost all} \;\; t \in [0,1],\\
F_0 & = \text{id}_{\mathbb{R}^3},\\
F_1(S) & = \Sigma.
\end{align}
\end{subequations}

To numerically solve the variational problem in \eqref{e:constopt} after space and time discretization of $S$, $\Sigma$, and time interval $[0,1]$, one has to relax the rigid matching constraint $F_1(S) = \Sigma$ as in~\cite{Beg:2005a,Trouve:1998a,Glaunes:2004a,Azencott:2010a,Zhang:2021a}. This can be achieved by relaxing the constraint $F_1(S) = \Sigma$ to $F_1(S) \approx \Sigma$, introducing a shape matching dissimilarity between $F_1(S)$ and $\Sigma$, as described in the following section.

\subsubsection{Kernel-Based Dissimilarity between Smooth Surfaces}

The set $\mathcal{S}$ of compact smooth 3D-surfaces with boundaries can be endowed with many natural shape matching dissimilarities. For a fast numerical solution of the variational problem \eqref{e:constopt},  efficient differentiable shape dissimilarities are provided, as we now outline, via the self-reproducing Hilbert space ({\bf RKHS}) associated to the radial Gaussian kernel $Q(x,y) = K_{\tau}(x,y)$ defined in \eqref{e:kernelKs} with any fixed scale parameter $\tau >0$. The space of bounded Radon measures $\mu$ on $\mathbb{R}^3$ is a Hilbert space $H$ for the norm $\|\mu\|$ defined by
\begin{equation}\label{e:RM}
\|\mu\|^2 = \int_{\mathbb{R}^3} \int_{\mathbb{R}^3} Q(x,y) \,\mathrm{d}\mu(x)\,\mathrm{d}\mu(y).
\end{equation}

The Lebesgue measure of $\mathbb{R}^3$ induces on each surface $S \in \mathcal{S}$ a Riemannian surface element $\mathrm{d}\mu_S (z)$, which determines a bounded Radon measure $\mu_S \in H$ with support equal to $S$. We rescale $\mu_S$ by imposing $\mu_S(S) = 1$. Define the Hilbertian \emph{shape matching dissimilarity} $\operatorname{HILB}(S,\Sigma)$ between any two surfaces $S, \Sigma \in \mathcal{S}$ by
\[
\operatorname{HILB}(S,\Sigma) = \| \mu_S - \mu_{\Sigma} \|^2.
\]

\noindent When $S, \Sigma$ are discretized by two finite grids of points $x_n \in S$, $1\leq n \leq N$ and $y_m \in \Sigma$, $1\leq m \leq M$, one approximates $\mu_S$ and $\mu_{\Sigma}$ by sums of Dirac masses
\[
\nu_S = \frac{1}{N} \sum_{n=1}^N \delta_{x_n}
\qquad \text{and} \qquad
\nu_{\Sigma} = \frac{1}{M} \sum_{m=1}^M \delta_{y_m}.
\]

\noindent This approximates $\text{HILB}(S,\Sigma)$ by $\|\nu_S - \nu_{\Sigma}\|^2$ which is a simple linear combination of all terms $Q(x_n,x_n')$, $Q(y_m,y_m')$, $Q(x_n,y_m)$.

\subsubsection{Relaxed Cost Functions for Numerical Diffeomorphic Registration}

To soften the rigid matching constraint $F_1(S) = \Sigma$ in \eqref{e:constopt}, fix a positive weight $\lambda$. Then, seek a velocity flow $v = (v_t)$, $v_t \in \mathcal{V}$, and an associated diffeomorphic flow $(F_t)$, $F_t\in\mathcal{V}$, which solves the relaxed variational problem
\begin{subequations}\label{e:varopt}
\begin{align}
\minopt_{v \in \mathcal{V},\;(F_t) \in \mathcal{F}} & \;\;\text{kin}(v)  + \lambda\, \text{HILB}(F_1(S),\Sigma) \\
\begin{aligned}
\text{subject to}\\\\
\end{aligned}
& \;\;
\begin{aligned}
\frac{\mathrm{d}F_t}{\mathrm{d}t} & = v_t(F_t) \;\; \text{for almost all} \;\; t \in [0,1],\\
F_0 & = \text{id}_{\mathbb{R}^3}.
\end{aligned}
\end{align}
\end{subequations}

For fixed $\lambda$, after space-time discretization of $S$, $\Sigma$, and $[0,1]$, the search for a vector field flow $v = (v_t)$ minimizing the cost functional is then implementable numerically by various gradient descent techniques~\cite{Glaunes:2004a,Beg:2005a,Azencott:2010a,Freeman:2014a,Jajoo:2011a,Zhang:2021a,Mang:2015a,Mang:2017a,Mang:2017b}).  Our numerical implementation of diffeomorphic registration for 3D-surfaces is outlined further in \secref{s:solver}, and provides a good approximation of the minimal kinetic energy $\operatorname{KIN}(S,\Sigma)$. The theoretically valid symmetry relation $\operatorname{KIN}(S,\Sigma) = \operatorname{KIN}(\Sigma, S)$ is only approximately true for numerical estimates, so that the average $(\operatorname{KIN}(S,\Sigma) + \operatorname{KIN}(\Sigma, S))/2$ improves numeric accuracy (but doubles computing times).

\subsubsection{Strain Analysis}\label{s:strain}

After computing a nearly optimal diffeomorphic registration $F=(F_t)$ matching two surfaces $S, \Sigma \in \mathcal{S}$, the terminal $\mathbb{R}^3$-diffeomorphism $f = F_1$ is a smooth invertible map from $S$ onto a smooth surface $\hat{S} = f(S)$, with very small Hausdorff distance $\operatorname{HAUS}(\hat{S}, \Sigma)$, where
\begin{equation}\label{e:hausdorff}
\textstyle\operatorname{HAUS}(S_1,S_2)= \max_{x\in S_1} \min_{y \in S_2} \|x -y\|.
\end{equation}

The minimal kinetic energy $\operatorname{KIN}(S,\Sigma)$ involves averages of squared velocities over the whole of $S$, and hence only provides a global dissimilarity between $S$ and $\Sigma$. Strain analysis of $f$, which we now outline as in \cite{Zhang:2021a}, will instead generate the spatial distribution of local distortions between $S$ and $\Sigma$: Fix any point $x \in S$ and let $y = f(x) \in f(S)$. Denote $T_x$, $T_y$ the tangent spaces to $S$, $\hat{S}$ at $x$ and $y$, respectively, endowed with local surface metrics. Since $f : S \to f(S)$ is a smooth bijection, the differential $f'(x)$ determines an invertible $2 \times 2$ linear map $J_x : T_x \to T_y$. For any tangent vector $u \in T_x$ with $\|u\| = 1$, the \emph{directional strain} at $x$ induced by deformation $f$ in direction $u$ is the length dilation (or contraction) factor $\mli{dirSTR}(x,u) = |J_x u|$. Let $J_x^*: T_y \to T_x$ be the transpose of $J_x$, and denote $m_x \leq M_x$ the eigenvalues of the positive definite $2 \times 2$ matrix $J_x^* J_x$. The minimal and maximal directional strains around $x$ are equal to $\sqrt{m_x}$ and $\sqrt{M_x}$, respectively. In general, one has $m_x < M_x$ and the directional strain at $x$ depends on $u$. As in \cite{Zhang:2021a}, to avoid this anisotropy, we focus on the \emph{isotropic strain}
\[
\mli{isoSTR}_x = \sqrt{m_x M_x} = \sqrt{| \det(J_x) |}.
\]

\noindent Indeed, $\mli{isoSTR}_x$ has a simple geometric interpretation. For  fixed $x \in S$, and any open patch $U_x \subset S$ around $x$, define the ratio of surface areas
\[
\operatorname{rat}(U_x)= \operatorname{area}(f(U_x)) / \operatorname{area}(U_x).
\]

\noindent Then, $\operatorname{rat}(U_x)$ tends to $\mli{isoSTR}_x^2$ when the diameter of $U_x$ tends to 0. This provides the following fast numerical approximation of isotropic strain: After discretization of $S$ by a finite grid $\mli{grid}_S$ with small mesh size, and triangulation of $\mli{grid}_S$, one simply sets $U_x$ to be the union of all triangles with vertex $x$. Since $\mli{isoSTR}_x$ is a dimensionless average length dilation (or contraction) factor around $x$, we convert it into an \emph{isotropic strain intensity}
\[
\mli{isi}_x = |\mli{isoSTR}_x - 1|.
\]

\noindent Clearly, $\mli{isi}_x$ quantifies the intensity of local lengths deformation around $x$ by the diffeomorphic registration $f = F_1$ from $f: S \to f(S) \approx \Sigma$.

As above, denote $\mu_S$ the probability distribution induced on $S$ by the $\mathbb{R}^3$-Lebesgue measure. When $x \in S$ is selected at random---with probability distribution $\mu_S$---the distribution of the random values $\mli{isi}_x$ is a probability $\operatorname{isi}(S,\Sigma)$ on $\mathbb{R}^+$. For each percentile $0 < \alpha < 1$, the quantile $q_{\alpha}$ of $\operatorname{isi}(S,\Sigma)$ can be viewed as a dissimilarity $\mli{QUANT}_{\alpha}(S, \Sigma)$ between $S$ and $\Sigma$. These dissimilarities are well approximated via quantiles of the finite samples $\mli{ist}_x$, $x \in \mli{grid}_S$, when $\mli{grid}_S$ has small mesh size. One can symmetrize $\mli{QUANT}_{\alpha}$ by averaging its values for $(S, \Sigma)$ and $(\Sigma, S)$.

\subsubsection{Diffeomorphic Registration Software}\label{s:solver}

Prior versions of the diffeomorphic registration algorithms described in this section have been developed and tested in~\cite{Azencott:2010a,Zhang:2021a}. These versions were implemented in MATLAB and applied to study 3D-echocardiography data of MV patients~\cite{ElTallawi:2021a,ElTallawi:2021b,ElTallawi:2019a,Zekry:2016a,Zekry:2018a,Zekry:2012a}. As in our earlier papers, we implement a \emph{discretize-then-optimize} approach for solving the optimization problem~\eqref{e:varopt}. The target shape $\Sigma$ and the template shape $S$ are represented by two grids of points $y_1 = \{y_1^i\}_{i=1}^{n}$ and $y_0 = \{y_0^i\}_{i=1}^{m}$ in $\mathbb{R}^3$, respectively. We model the velocity vector fields $v_t$ as vectors belonging to a RKHS defined by a Gaussian kernel. Consequently, we write
\[
v_t(z) = \sum_{i=1}^{m} K_{\sigma}(x_i(t),z) a_i(t)\quad \text{for all}\;\; z \in \mathbb{R}^3.
\]

\noindent Here, $K_{\sigma} : \mathbb{R}^3 \times \mathbb{R}^3 \to \mathbb{R}$ corresponds to the kernel in \eqref{e:kernelKs}. The coefficients $\{a_i\}_{i=1}^m$,  $a_i \in \mathbb{R}^3$, are then the new controls of the discrete optimization problem. We discretize the time interval $[0,1]$ by a nodal grid, resulting in $q$ equispaced intervals. We collect the associated coefficients $a_i^j \defeq a_i(t_j)$, $j = 0,\ldots,q$, in a vector (lexicographical ordering) $a$ of size $c = 3m(q+1)$. The ODE (i.e., the flow equation) in \eqref{e:varopt} is discretized using a first order explicit Euler method. As for the control $a$, we collect the associated states (deformed shape) in a concatenated vector $x$ of size $s = 3m(q+1)$. With this, we can represent the forward Euler step as a linear system
\[
\begin{bmatrix} G^x \;\; G^a \end{bmatrix}
\begin{bmatrix} x \\ a \end{bmatrix}
= g
\]

\noindent with state-control vector $(x,a) \in \mathbb{R}^{6m(q+1)}$. The moderate value $q = 4$ achieved a good compromise between numerical accuracy and computing time for all numerical diffeomorphic surface registrations considered here. Moreover, in our benchmark application, $m = n = 800$.

With slight abuse of notation we arrive at the discretized optimization problem
\begin{subequations}
\label{e:discvaropt}
\begin{align}
\minopt_{a\in\mathbb{R}^c,\; x \in \mathbb{R}^s}
 & \;\;\text{kin}(a)  + \lambda\, \text{HILB}(x^q,y_1) \\
\text{subject to} &\;
\begin{bmatrix} G^x \;\; G^a \end{bmatrix}
\begin{bmatrix} x \\ a \end{bmatrix}
= g.
\end{align}
\end{subequations}

\noindent We refer to \cite{Zhang:2021a} for the precise form of the operators that appear in \eqref{e:discvaropt}. To solve \eqref{e:discvaropt} we apply an operator splitting strategy typically referred to as alternating direction method of multipliers~\cite{Boyd:2011a,ODonoghue:2013a,Parikh:2013a} or Douglas--Rachford splitting~\cite{Douglas:1956a}. In its modern form, this algorithm was introduced in~\cite{Glowinski:1975a,Gabay:1976a}. The consensus form of operator splitting for the discrete control problem in~\eqref{e:discvaropt} at iteration $k$ is given by
\begin{subequations}
\begin{align}
\begin{bmatrix}
x_{k+1} \\ a_{k+1}
\end{bmatrix}
&=
\argmin_{x,\, a}
\left(
\text{ind}_{\mathcal{C}}(x,a) + \text{kin}(a)
+ \frac{\chi}{2}
\left\|\,
\begin{bmatrix} x \\ a \end{bmatrix}
-
\begin{bmatrix} \tilde{x}_k \\ \tilde{a}_k \end{bmatrix}
-
\begin{bmatrix} u_k \\ w_k \end{bmatrix}
\,\right\|_2^2
\right),
\label{e:stepkin}
\\
\begin{bmatrix}
\tilde{x}_{k+1} \\ \tilde{a}_{k+1}
\end{bmatrix}
&=
\argmin_{\tilde{x},\, \tilde{a}}
\left(
\lambda \text{HILB}(\tilde{x}^q,y_1)
+ \frac{\chi}{2}
\left\|\,
  \begin{bmatrix} \tilde{x} \\ \tilde{a} \end{bmatrix}
- \begin{bmatrix} x_{k+1} \\ a_{k+1} \end{bmatrix}
+ \begin{bmatrix} u_k \\ w_k \end{bmatrix}
\,\right\|_2^2
\right),
\label{e:stepdist}
\\
   \begin{bmatrix} u_{k+1} \\ w_{k+1} \end{bmatrix}
&= \begin{bmatrix} u_k \\ w_k \end{bmatrix}
+  \begin{bmatrix} \tilde{x}_{k+1} \\ \tilde{a}_{k+1} \end{bmatrix}
-  \begin{bmatrix} x_{k+1} \\ a_{k+1} \end{bmatrix}.
\label{e:consens}
\end{align}
\end{subequations}

\noindent Here, $\chi > 0$ is an algorithm parameter and $\text{ind}_{\mathcal{C}}$ represents an indicator function for the set $\mathcal{C} \subseteq \mathbb{R}^{6m(q+1)}$ of state-control pairs $(x,a)$ that satisfy the discretized dynamical system in \eqref{e:discvaropt}, i.e.,
\[
\mathcal{C} \defeq \left\{\begin{bmatrix} x \\ a \end{bmatrix} \mid
\begin{bmatrix} G^x \;\; G^a \end{bmatrix}
\begin{bmatrix} x \\ a \end{bmatrix}
= g
\right\}.
\]

We solve the first order optimality conditions of subproblem \eqref{e:stepkin} for the state-control vector $(x,a)$ using a matrix-free, preconditioned conjugate gradient method. Given the solution of \eqref{e:stepkin}, we solve \eqref{e:stepdist} for the state-control vector $(\tilde{x},\tilde{a})$ using a matrix-free Newton--Krylov method~\cite{Nocedal:2006a}. The last step in \eqref{e:consens} represents an update of the dual variables $(u,w)$ associated with the consensus constraint $(x,a) = (\tilde{x},\tilde{a})$. We note that~\cite{Zhang:2021a} used direct methods to solve the individual subproblems instead of iterative methods. Using iterative methods allowed us to reduce the runtime by roughly a factor of two. A more detailed discussion of this solver is beyond the scope of the present paper and will be provided elsewhere. For the sake of the present work we will consider it a black-box method that provides us with efficient and fast diffeomorphic registration of 3D-surfaces. We terminate this algorithm when the discretized, censored Hausdorff distance between the deformed shape $x^q$ and the reference shape $y_1$ is of the order of the surface discretization mesh size  (see \cite{Azencott:2010a,Zhang:2021a} for details).

\subsection{Shape Dissimilarities and Automatic Shape Classification}\label{s:autoclass}

\subsubsection{Automatic Shape Classification: The Need for Intrinsic Feature Vectors}

Consider any dataset $\mathcal{D} \subset \mathcal{S}$ of smooth 3D-surfaces, partitioned into a finite set of classes $\mli{CL}_j$. We implement automatic classification in $\mathcal{D}$ by supervised ML. All well known classifiers such as MLPs, RFs or SVMs require describing each surface $S \in \mathcal{D}$ by ``natural''  feature vectors $\operatorname{vec}(S)$ belonging to an Euclidean space of \emph{fixed} dimension (or to some fixed self-reproducing Hilbert space). However, in practice each surface $S \in \mathcal{D}$ is available only through discretization by a finite grid $\mli{grid}_S$ of points $x_n \in S$, and the cardinality $\mli{card}_S$ of $\mli{grid}_S$ often varies with $S$. Naive direct indexation of $S$ by the vector $X(S) = [x_1, x_2, \ldots, x_n, \ldots]$, is mathematically not sound, since the dimension $3 \mli{card}_S$ of $X(S)$ may vary with $S$, and any permutation of the $x_n$ would radically modify $X(S)$, while still defining the same discretized surface $S$. Intrinsic feature vectors $\operatorname{vec}(S)$ describing smooth shapes $S$ discretized by finite grids $\mli{grid}_S$ should at least remain stable under permutations of these grids, and exhibit some natural consistency when the mesh size of $\mli{grid}_S$ tends to 0. Another natural requirement in many biomedical applications is invariance of $\operatorname{vec}(S)$ when $S$ is replaced by $\kappa.S$, where $\kappa$ is any rigid motion in $\mathbb{R}^3$.

\subsubsection{Families of Dissimilarities Invariant to Rigid Motions}\label{s:families}

To generate intrinsic feature vectors, we start by introducing multiple types of \emph{dissimilarities} $\operatorname{dis}(S,\Sigma) = \operatorname{dis}(\Sigma, S) \geq 0$ between pairs of surfaces. These dissimilarities are not necessarily distances; indeed, some of them are natural squared distances for instance but in all our uses of dissimilarities one does not need the triangular inequalities verified by distances. For the purpose of automatic classification of smooth surfaces, our generic  dissimilarities are typically of the form
\begin{equation}\label{e:dissimilarities}
    \operatorname{dis}(S,\Sigma) = u\left[d(S,\Sigma)\right],
\end{equation}

\noindent where $u: \mathbb{R}^+\to \mathbb{R}^+$ is any continuous increasing function  verifying $u(0)=0$ and $d(S,\Sigma)$ is any bona fide distance between smooth surfaces.

We have already defined four explicit types of dissimilarities $\operatorname{dis}(S,\Sigma)$ between pairs of surfaces $S,\Sigma \in \mathcal{S}$, namely:
\begin{enumerate}
\item\label{i:HAUS} the Hausdorff distance $\operatorname{HAUS}(S,\Sigma)$,
\item\label{i:HILB} the squared Hilbert distance $\operatorname{HILB}(S,\Sigma)$,
\item\label{i:KINE} the kinetic energy $\operatorname{KIN}(S,\Sigma)$, and
\item\label{i:QUAN} the strain quantiles $\operatorname{QUANT}_{\alpha}(S,\Sigma)$.
\end{enumerate}

\noindent Let $\text{SE}(3)$ be the group of rigid motions in $\mathbb{R}^3$, generated by translations and rotations. Each one of these basic four dissimilarities $\text{dis}(S,\Sigma)$ is invariant by rigid motion, i.e., verifies
\begin{equation}\label{e:invar}
\operatorname{dis}(S,\Sigma) = \operatorname{dis}(\rho.S, \rho.\Sigma) \;\; \text{for all} \;\; \rho \in \text{SE}(3).
\end{equation}

\noindent This is an interesting property in the context of automatic classification for soft organ shapes observed across multiple patient groups, since one expects the true class of a soft shape $S$ to be invariant by all rigid motions of $S$. Within our benchmark dataset of discretized MV surfaces, we also wanted to mitigate the impact of patient height and weight by applying to each surface $S$ an adequate homothetic transformation. This led us to construct strongly invariant dissimilarities as follows.

\subsubsection{Strongly Invariant Dissimilarities}\label{s:stronglyinvariant}
For any surface $S \in \mathcal{S}$, denote $c_S$ its center of mass and $\operatorname{ten}_S$ its $3 \times 3$ tensor of inertia, which are given by
\[
c_S = \int_{x \in S}  x\; \mathrm{d}\mu_S(x)
\qquad \text{and}\qquad
\mli{ten}_S
= \int_{x \in S} \int_{y \in S} (x -c_S)(y-c_S)^* \,\mathrm{d}\mu_S(x)\,  \mathrm{d}\mu_S(y),
\]

\noindent where $x$, $c_S$ are column vectors and $*$ denotes transposition. When $S$ is discretized by a finite grid $\mli{grid}_S$, the intrinsic measure $\mu_S$ induced on $S$ by $\mathbb{R}^3$-Lebesgue measure is simply the average of Dirac masses carried by the points of $\mli{grid}_S$. The matrix $\operatorname{ten}_S$ has positive eigenvalues $\lambda_1 \leq \lambda_2 \leq \lambda_3$ and unit length eigenvectors $\eta_1, \eta_2, \eta_3$.

Fix an orthonormal basis $e_1,e_2,e_3$ in $\mathbb{R}^3$. For each $S \in \mathcal{D}$ define the $\mathbb{R}^3$-rotation $\mli{rot}_S$ and the rigid motion $\rho_S$ by
\begin{equation}\label{e:rhoS}
\mli{rot}_S \eta_j = e_j \;\;\text{for}\;\; j = 1, 2, 3,
\qquad \text{and} \qquad
\rho_S y = \mli{rot}_S (y-c_S) \;\; \text{for}\;\; y \in \mathbb{R}^3.
\end{equation}

To each surface $S$ in our dataset $\mathcal{D}$, we will first associate the surface $S'= \rho_S.S$. Note that $c_{S'} = 0$, and that the inertia tensor $\mli{ten}_{S'}$ has $\{e_1, e_2, e_3\}$ as ordered eigenvectors. Then, we transform $S'$ into $S''= h_S. S'$, where $h_S$ is the homothety centered at 0 and such that $\operatorname{area}(S'') = 1$. The linear transformation $g_S = h_S \circ \rho_S$ belongs to the group $G3$ generated by the rotations, translations, and homotheties from $\mathbb{R}^3$ to $\mathbb{R}^3$, and we will systematically replace $S$ by $S''= g_S.S$. Let  $\kappa$ be either a translation, or a rotation, or an homothety from $\mathbb{R}^3$ to $\mathbb{R}^3$. As is directly verified in each one of these three cases, the surface  $(\kappa.S)''= g_{\kappa.S}.(\kappa.S)$ is actually identical to $S''$. Hence the property $(\kappa.S)'' = S''$ will also hold for all linear transformations $\kappa \in G3$.

Any dissimilarity $\operatorname{dis}(S,\Sigma)$ defined for all pairs $S,\Sigma$ in $\mathcal{S}$ and verifying the rigid motion invariance \eqref{e:invar}, will naturally define a standardized dissimilarity $\operatorname{dis}^{+}$ by
\begin{equation}\label{e:disrho}
\operatorname{dis}^{+}(S,\Sigma) = \operatorname{dis}(g_S.S, g_{\Sigma}.\Sigma) = \operatorname{dis}(S'',\Sigma'').
\end{equation}

\noindent Thanks to \eqref{e:invar}, the dissimilarity $\operatorname{dis}^{+}(S,\Sigma)$ does not depend on the choice of orthonormal basis $e_1,e_2,e_3$. Moreover, for any pair of transformations $\kappa_1, \kappa_2 \in G3$, we have---as seen above---$(\kappa_1.S)'' = S''$ and $(\kappa_2.\Sigma)'' = \Sigma''$, so that \eqref{e:disrho} implies
\[
\operatorname{dis}^{+}(S,\Sigma)
= \operatorname{dis}^{+}(\kappa_1.S,\kappa_2.\Sigma) \;\;
\text{for all} \;\; \kappa_1,\kappa_2  \in G3.
\]

To each one of the dissimilarities $\operatorname{dis}$ listed above as items \ref{i:HAUS} through \ref{i:QUAN}, we apply the preceding construction, to generate a corresponding  strongly invariant dissimilarity $\operatorname{dis}^{+}$. In what follows, we will adopt the following \emph{simplified notations}:
\begin{enumerate}
\item\label{i:haus}  $\operatorname{haus}(S,\Sigma) = \operatorname{HAUS}^{+}(S,\Sigma)$,
\item\label{i:hilb} $\operatorname{hilb}(S,\Sigma) = \operatorname{HILB}^{+}(S,\Sigma)$,
\item\label{i:kine} $\operatorname{kin}(S,\Sigma) = \operatorname{KIN}^{+}(S,\Sigma)$, and
\item\label{i:quan} $\operatorname{quant}_{\alpha} (S,\Sigma) = \operatorname{QUANT}_{\alpha}(S,\Sigma)$.
\end{enumerate}

\subsubsection{Construction of Intrinsic Feature Vectors for Shape Classification}\label{s:intrinsic}

Let $\mathcal{D}$ be a benchmark set of smooth surfaces partitioned into several disjoint classes $\mli{CL}_j$ of surfaces. To define intrinsic feature vectors describing these surfaces, we select (and fix) in $\mathcal{D}$ a finite \emph{reference set} $\mli{REF}= \{\Sigma_1 \ldots \Sigma_N\}$ of surfaces. In our applications below, $\mli{REF}$ will simply be either the set of all training cases or a large subset of the set of ``normal'' cases.

Select any one of the strongly invariant dissimilarities $\operatorname{dis}(S,\Sigma)$  listed as items \ref{i:HAUS} through \ref{i:QUAN} above. Then, characterize any surface $S \in \mathcal{D}$ by the $N$-dimensional \emph{feature vector} $\operatorname{vec}(S)$ having coordinates
\[
\operatorname{vec}(S)_k = \operatorname{dis}(S, \Sigma_k).
\]

These feature vectors verify the $G3$ invariance $\operatorname{vec}(S) = \operatorname{vec}(\kappa.S)$ for all linear transformations $\kappa \in G3$. We extend this approach by selecting $p$ distinct dissimilarities $\operatorname{dis}_1, \ldots, \operatorname{dis}_p$  among all the strongly invariant dissimilarities listed above as items \ref{i:HAUS} through \ref{i:QUAN}. Each associated $\operatorname{dis}_i$ defines---as above---feature vectors $\operatorname{vec}^i(S) \in \mathbb{R}^N$, which can naturally be concatenated into the $pN$ dimensional feature vector
\[
\operatorname{VEC}(S) = \left[ \operatorname{vec}^1(S),  \ldots, \operatorname{vec}^p(S)\right],
\]

\noindent which also verifies $\operatorname{VEC}(\kappa.S) = \operatorname{VEC}(S)$ for all $\kappa \in G3$. Note that the feature vectors just constructed essentially verify the desirable properties outlined above for intrinsic feature vectors.

To test and validate our intrinsic feature vectors approach for indexing  smooth  surfaces, we have applied it to the automatic classification of MVs by RF classifiers (see \secref{s:results} below).

\subsection{Data Enrichment}\label{s:enrichment}

Since our benchmark dataset included unbalanced classes, we first had to implement new diffeomorphic techniques for rebalancing of small classes, as we now outline.

\subsubsection{Enrichment by Diffeomorphic Interpolations}\label{s:interpolation}

In automatic classification via machine learning by well known classifiers such as MLPs~\cite{Rosenblatt:1957a}, CNNs, RFs~\cite{Cutler:2012a,Breiman:2001a}, or SVMs~\cite{Cortes:1995a}, strong imbalance between class sizes tends to degrade classification accuracy, specifically among smallest classes. Since our benchmark dataset of MV surfaces was derived from unbalanced classes of patients, we have implemented several diffeomorphic deformation algorithms for the rebalancing of small classes. In most applications of automatic classifiers, all cases are described by feature vectors belonging to a fixed Euclidean space, and  enriching any small class $\mli{CL}$ of cases is often implemented via the SMOTE algorithm (see \cite{Chawla:2002a}), which linearly interpolates between neighboring feature vectors of $\mli{CL}$-cases. But for datasets of discretized smooth 3D-shapes, the SMOTE algorithm is not directly applicable because for intrinsic feature vectors (such as those constructed above) convex combinations of feature vectors produce vectors which are not necessarily associated to any smooth surface. So, to enrich any given finite class $\mli{CL} \subset \mathcal{S}$ of smooth 3D-surfaces, we proceed  by \emph{nonlinear diffeomorphic interpolation} between pairs of surfaces $S, \Sigma \in \mli{CL}$ such that $\operatorname{haus}(S,\Sigma)$ is small enough. Recall that the strongly invariant dissimilarity $\operatorname{haus}$ was defined above by \ref{i:haus}. With the notations of section \secref{s:stronglyinvariant}, replace $S$ and $\Sigma$ by $S''= g_S.S$ and $\Sigma''=g_{\Sigma}.\Sigma$, where $g_S$ and $g_{\Sigma}$ are in the group $G3$. Compute an optimized diffeomorphic deformation flow $F= (F_t)$ such that  $F_1(S'') \approx \Sigma''$. In practical applications below, time is discretized and we select an intermediary value $t \in [0,1]$ between $1/2$ and $3/4$ before adding the new smooth surface $S_t = F_t(S'')$ to the class $\mli{CL}$. See \figref{f:diffeshapeinterp} below.

\begin{figure}
\centering
\includegraphics[width=0.5\textwidth,trim=4cm 4.3cm 4cm 3.3cm,clip]{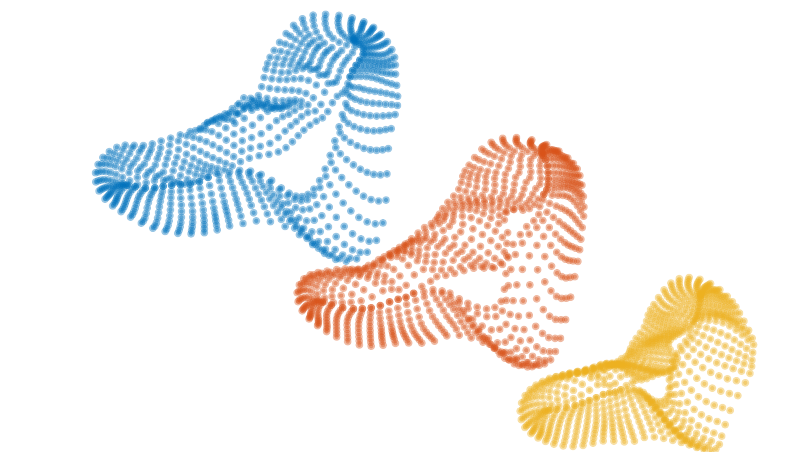}
\caption{The orange surface $S_{3/4}$ is created by diffeomorphic interpolation between the blue surface $S_0 = S$ and the yellow surface $S_1 = \Sigma$.}
\label{f:diffeshapeinterp}
\end{figure}

\subsubsection{Enrichment by Random Diffeomorphic Deformations}\label{s:randomdiffeo}

Fix any given finite class $\mli{CL}$ of smooth $3D$-shapes. To enrich $\mli{CL}$, we have also implemented small random diffeomorphic perturbations of the surfaces in $\mli{CL}$. Our algorithms rely on simulating smooth random Gaussian vector fields depending indexed by time $t$ and all points $x \in\mathbb{R}^3$, before integrating them in time.

\paragraph{\it Simulation of Smooth Gaussian Random Vector Fields} Fix any integer $d >0$, which in this paper will only take values $d\in \{1,2,3\}$. A random $\mathbb{R}^d$-valued vector field $W$,  $x \mapsto W(x)$, indexed by all $x \in \mathbb{R}^3$ is formally a set of random vectors $W(x) \in \mathbb{R}^d$ defined on the same probability space $(\Omega,P)$ and such that $W(x,{\omega})$ is a jointly measurable function of $(x,\omega) \in \mathbb{R}^3 \times \Omega$. Such a random  vector field $W$ is called \emph{Gaussian} when for any finite set of points $x(1), \ldots, x(m)$ in $\mathbb{R}^{3m}$, the random vector $(W(x(1)),\ldots,W(x(m))$---which belongs to $\mathbb{R}^{dm}$---has a Gaussian distribution. Since any multi-dimensional Gaussian is characterized by its mean and its covariance matrix, the probability distribution of a Gaussian random vector field is fully determined by two deterministic functions, namely, $x \mapsto u(x)$, $u(x) = E[W(x)] \in \mathbb{R}^d$ and $(x,y) \mapsto \operatorname{ker}(x,y)= \operatorname{Cov}(W(x),W(y))$, where each $d \times d$ matrix $\operatorname{ker}(x,y)$ is positive semi-definite.

For easier simulations of Gaussian random vector fields $W: \mathbb{R}^3 \to \mathbb{R}^3$, we focus only on the case where $E[W(x)] = 0 $ for all $x$ and the $3 \times 3$ covariance kernel $\operatorname{ker}(x,y)$ is a \emph{diagonal matrix}:
\begin{equation}\label{e:k1k2k3}
\operatorname{ker}(x,y) = \operatorname{diag}[\operatorname{ker}_1(x,y), \operatorname{ker}_2(x,y), \operatorname{ker}_3(x,y) ] \;\;\text{for all} \;\; x,y \in \mathbb{R}^3.
\end{equation}

\noindent For each $j=1,2,3$, we fix a scale parameter $s_j >0$ and we define the radial kernel $\operatorname{ker}_j(x,y)$ for all $ x,y \in \mathbb{R}^3$ by
\begin{equation} \label{e:kj}
\operatorname{ker}_j(x,y) = k(x/s_j, y/s_j),
\;\;\text{where}\;\;k(x,y) = \exp(-\|x-y\|^2).
\end{equation}

\noindent In the appendix \secref{s:onedim.gaussian}, we outline our algorithm to numerically simulate a one-dimensional Gaussian random field $x \mapsto U(x)\in \mathbb{R}$ with mean 0 and covariance kernel $k(x,y) = \exp(- \|x-y\|^2)$, indexed by  all $x \in \mathbb{R}^3$. Moreover, $x \mapsto U(x)$ is an almost surely smooth function of $x$. Our simulation algorithm involves the numerical summation over all triplets of non negative integers $(m,n,p)$ of the almost surely converging explicit series
\begin{equation}\label{e:Useries}
U(x) = \sum_{m,n,p} Z_{m,n,p} u_{m,n,p}(x),
\end{equation}

\noindent where each $u_{m,n,p}(x)$ is an explicit deterministic smooth function of $x \in \mathbb{R}^3$, and the $Z_{m,n,p}$ are independent Gaussian random variables having the same mean 0 and standard deviation 1. This numerical summation can be replicated 3 times (with new $Z_{m,n,p}$ each time), to simulate 3 \emph{independent versions} $U^1(x), U^2(x), U^3(x)$ of the smooth Gaussian random field $U(x)$. We then rescale $U^j(x)$ by setting $W^j(x)= U^j(x/s_j)$ for all $x \in \mathbb{R}^3$ and each $j=1,2,3$. Then, for $x \in \mathbb{R}^3$, we define the 3-dimensional smooth Gaussian random field $W(x) = [W^1(x), W^2(x), W^3(x)]$. Moreover, $W(x)$ has mean 0 and diagonal covariance kernel $\operatorname{ker}(x,y)$ given by equations~\eqref{e:k1k2k3} and~\eqref{e:kj}. In \figref{f:2d-rand-field} we display an example of smooth 2-dimensional Gaussian random field $[W^1(x), W^2(x)]$ simulated by numerical summation of the series \eqref{e:Useries}.

\begin{figure}
\centering
\includegraphics[width=0.4\textwidth]{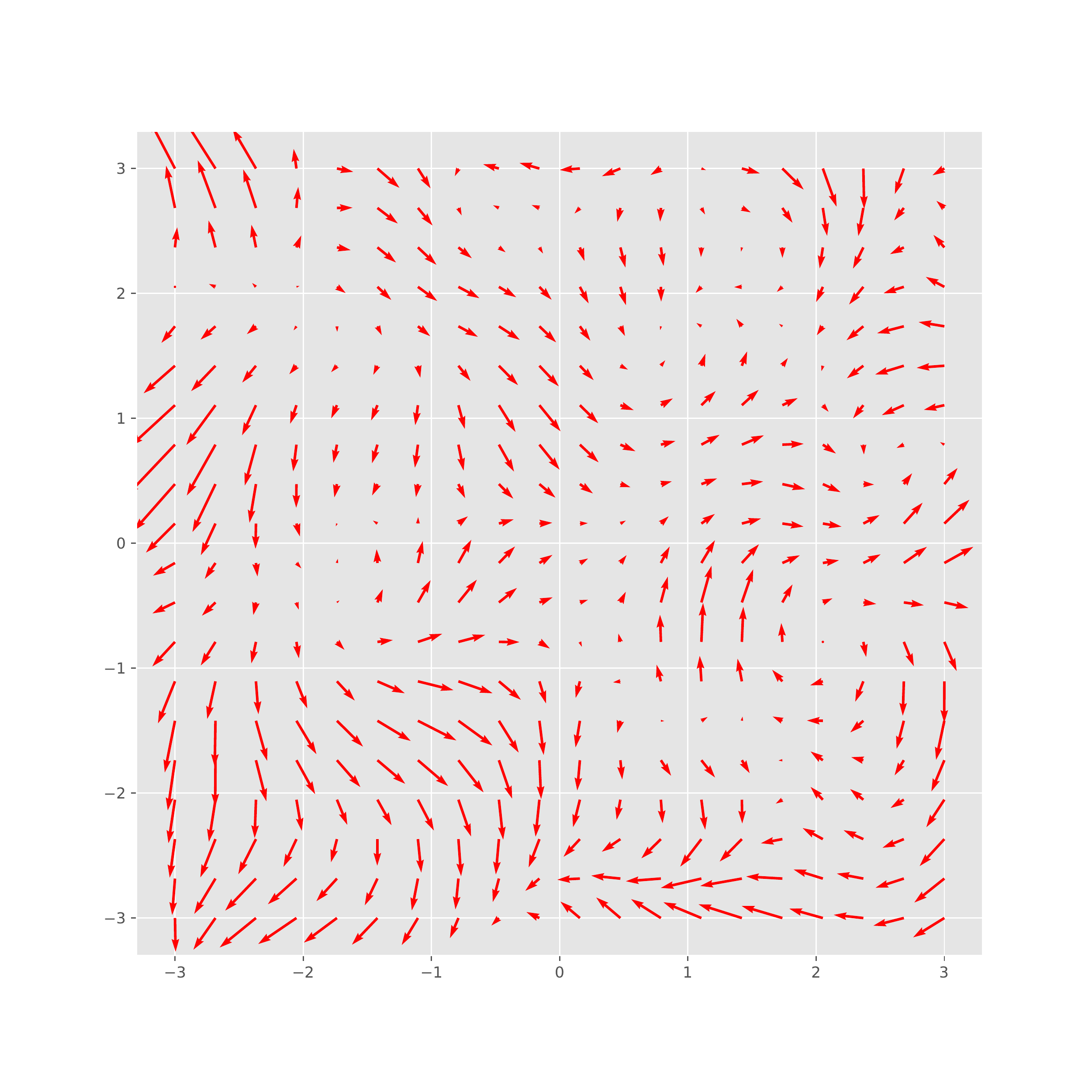}
\caption{An example of simulated 2-dimensional smooth Gaussian random field $W(x)=[W^1(x),W^2(x)]$. For better visualization, all vectors $W(x)$ are normalized to have the same length.}
\label{f:2d-rand-field}
\end{figure}

\paragraph{\it Small Random Diffeomorphic Perturbations of Smooth Surfaces} Let $W(x) \in \mathbb{R}^3$ be the just described smooth Gaussian random vector field indexed by $x \in \mathbb{R}^3$. Let $L$ be any fixed $3 \times 3$ matrix. Then, for $x \in \mathbb{R}^3$ the affine function $x \mapsto Lx$ defines a deterministic vector field, and
\[
V_t(x)= t Lx + \sqrt{t} W(x) \;\; \text{for all}\;\; t \geq 0, x \in \mathbb{R}^3
\]

\noindent defines a time indexed flow $V_t(x)$ of smooth Gaussian random vector fields with deterministic mean vector field $E[V_t(x)] = t Lx$ and diagonal covariance kernel $\operatorname{ker}(x,y)$ given by~\eqref{e:k1k2k3} and~\eqref{e:kj}. We have outlined above how to simulate $x \mapsto W(x)$, which then directly provides the values of $V_t(x)$ for all $(t,x)$. We then numerically generate a stochastic flow $F_t$ of random $\mathbb{R}^3$-diffeomorphisms by \emph{pathwise} discrete integration in $t$ of the stochastic ODE $\mathrm{d}F_t(x)/\mathrm{d}t = V_t(F_t(x))$, with $F_0= \text{id}_{\mathbb{R}^3}$. Time is discretized by fixing a moderate number of instants $t_j = j \delta$, $j=1,2, \ldots$, with small time step $\delta > 0$. Since the random vector fields $V_t(x)$ are almost surely smooth in $(t,x)$, discretized integration in time and space is mathematically stable when discretization meshes tend to 0. To randomly perturb a discretized surface $S \in \mli{CL}$, numerical ODE integration is done separately for each initial $x \in S$. This will generate the points $y_j = F_{t_j}(x)$. One can then enrich the set $\mli{CL}$ by adding the (discretized) smooth surface $F_t(S)$ to $\mli{CL}$ as a virtual case, after checking that $F_t(S)$ is still close enough to $S$. Indeed, for bona fide enrichment of a given class of surfaces one needs to stop the random diffeomorphic deformations $F_t(S)$ at moderate values of time $t$, as displayed for instance in \figref{f:illustration-deformations}.

\begin{figure}
\centering
\includegraphics[width=\textwidth]{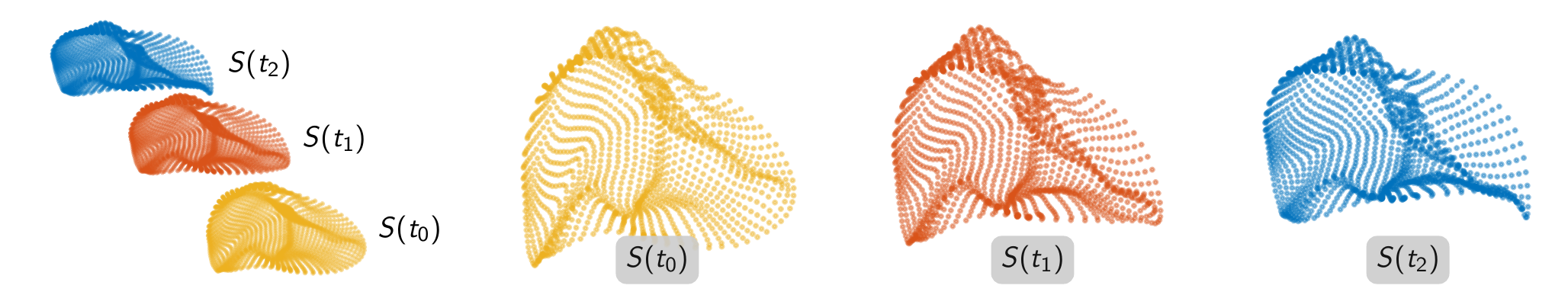}
\caption{Enrichment of shape classes through random diffeomorphic deformations $F_t(S)$. We display the initial surface $S(t_0)$ along with deformed surfaces $S(t_1)$ and $S(t_2)$  at times $0 \leq t_0 < t_1 < t_2$ in an overlay view on the left. The surfaces $S(t_1)$ and $S(t_2)$ are generated by applying a time indexed flow of random diffeomorphic deformations to the initial surface $S(t_0)$. Enlarged views of the individual surfaces $S(t_0)$, $S(t_1)$, and $S(t_2)$ are shown to the right. The applied deformations remain moderate as long as $t_2 - t_0$ (and by that $t_1 - t_0$) is small.}\label{f:illustration-deformations}
\end{figure}

In our application, to enrichment for small classes of discretized MV surfaces, we control random perturbations via simple geometric criteria such as triangulation homogeneity and the trimmed Hausdorff distance.

\section{Setup, Experiments and Results}\label{s:results}

\subsection{Automatic Classification of MVs: Regurgitation vs. Normal}\label{s:results-classification}

We have developed a methodology for automatic classification of smooth 3D-surfaces based on diffeomorphic registration of surfaces. We have implemented this automatic classification methodology for a subset of our MV surfaces, namely,  the union of two disjoint classes of MVs---normal cases vs. regurgitation cases.

\subsubsection{Benchmark Classification Task}

Our original dataset involves 3D-views of human MVs and is acquired by echocardiography for 150 patients, with roughly twenty-five 3D-views per patient, spanning one heart cycle. Each MV 3D-view is discretized by a grid of 1\,600 points in $\mathbb{R}^3$, with 800 points per MV leaflet. \emph{MV Regurgitation} occurs when at midsystole the two MV leaflets AL and PL do not close properly around the coaptation line (instead of tightly closing as in normal patients). At midsystole, this incomplete MV closure leaves a narrow gap between AL and PL, inducing a blood flow leak in the wrong direction during systole, which weakens the normal blood flow through arteries. MV regurgitation is not rare after age 60, and severe regurgitation requires surgical MV repair.

Our benchmark application is ML classification of regurgitation versus normal cases, based on the diffeomorphic matching techniques introduced above. In 3D-echocardiography, MV regurgitation is best visible at midsystole, so we kept only the MV views acquired at times closest to midsystole, namely four views per regurgitation case and two views per normal case. This defines an initial benchmark dataset $\mathcal{D}$ of 3D-discretized MV surfaces, partitioned into 120 regurgitation cases and 200 normal cases.

\subsubsection{Enrichment of Benchmark Data Set}

Recall that the strongly invariant dissimilarity $\operatorname{haus}(S,\Sigma)$ defined in \ref{i:haus}. As outlined in \secref{s:interpolation}, we enrich the initial class of 120 regurgitation cases by diffeomorphic interpolation between pairs $S,\Sigma$ of regurgitation 3D-views having small dissimilarity $\operatorname{haus}(S,\Sigma)$. Our shape interpolation technique is thus applied to 80 pairs $S,\Sigma$ of regurgitation MV surfaces, and generates 80 new virtual regurgitation cases. After this first enrichment of our benchmark dataset, the two classes ``regurgitation''  and ``normal'' now have the same size of 200.

As outlined in \secref{s:randomdiffeo}, we have then implemented one  small random diffeomorphic deformation for  each one  of these 400 smooth surfaces, in order to generate 200 new virtual ``regurgitation'' cases and 200 new virtual ``normal'' cases. After this second enrichment, our new benchmark  dataset, still denoted $\mathcal{D}$ for simplicity, involves now a total of 800 smooth surfaces, namely 400 regurgitation cases and 400 normal cases.

\subsubsection{3D-Image Cropping of MV Surfaces}

For expert cardiologists inspecting sequences of live echocardiography data in 3D, diagnosis of MV regurgitation includes visually checking if around midsystole small gaps emerge along the coaptation line. We hence deliberately focus our MV-surfaces analysis on an MV area close to the coaptation line, and we restrict our diffeomorphic shape registrations techniques to \emph{cropped 3D-snapshots} of MV surfaces. Our  image cropping  keeps only the central half of each original discretized MV-surface snapshot $S$, namely the 800 grid points of $S$ which are roughly closest to its coaptation line. We display a cropped MV surface in \figref{f:half-mv}. After enrichment and image cropping, our final  benchmark  dataset, which we still denote $\mathcal{D}$, contains a total of 800 cropped smooth MV-surfaces, namely 400 regurgitation cases and 400 normal cases.

\begin{figure}
\centering
\includegraphics[width=0.4\textwidth]{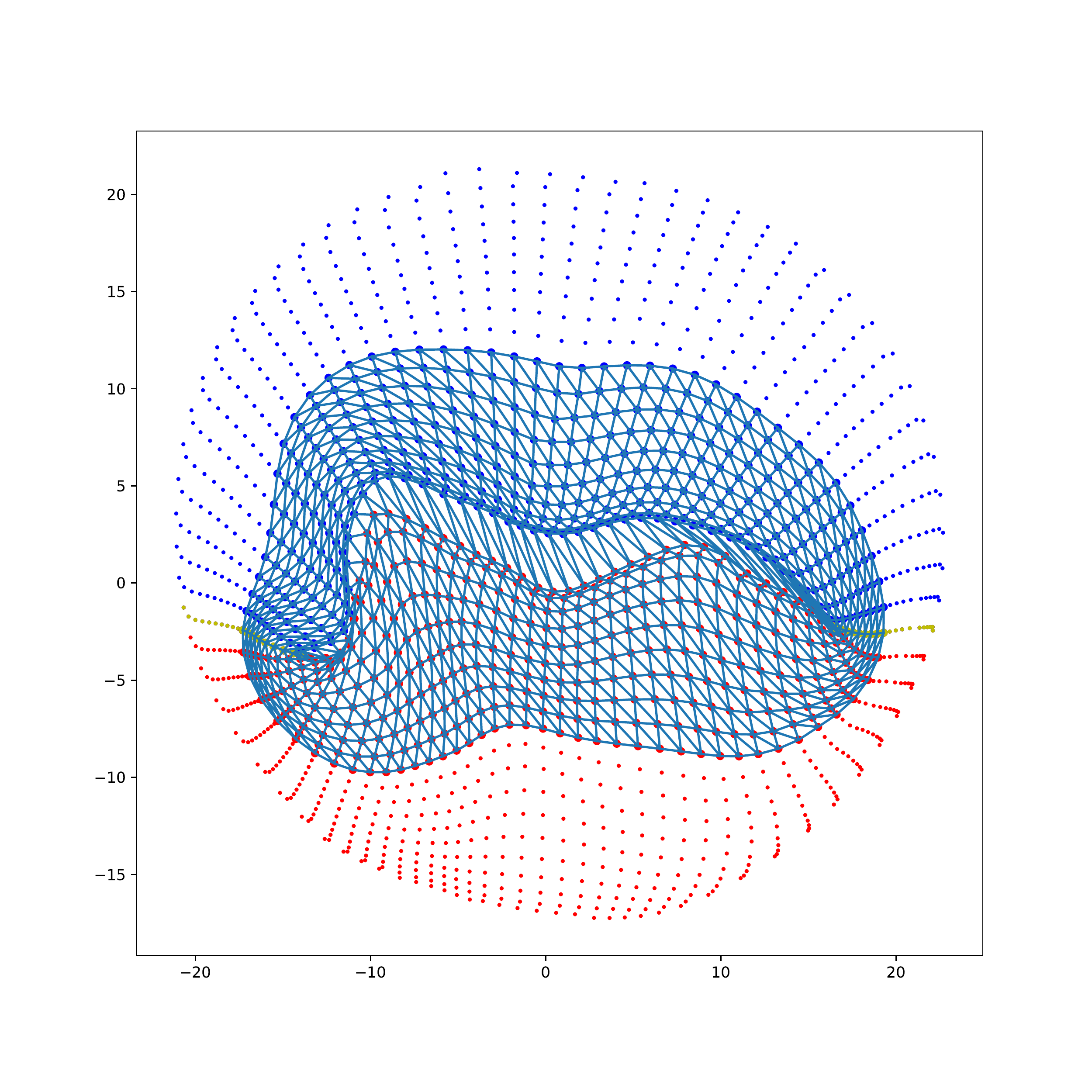}
\caption{Cropped MV surface snapshot for a severe MV regurgitation case: Only the 800 points closest to the coaptation line have been kept.}
\label{f:half-mv}
\end{figure}

\subsubsection{Comparative Statistical Analysis of Kinetic Energy and Strain Quantiles}\label{s:histograms}

Let $G3$ be as above the group of linear transformations of $\mathbb{R}^3$ generated by rotations, translations, and homotheties. For our enriched benchmark dataset $\mathcal{D}$ of 800 cropped MV-surfaces, we have studied the histograms of dissimilarity values taken by the following five strongly invariant dissimilarities:
\begin{enumerate}
\item the kinetic energy $\operatorname{kin}(S,\Sigma)$.
\item the four strain intensity quantiles $\operatorname{quant}_{\alpha}(S,\Sigma)$ with $\alpha \in\{0.05, 0.50, 0.95, 0.99\}$.
\end{enumerate}

\noindent The construction of these dissimilarities was outlined in \secref{s:families} and \secref{s:stronglyinvariant}. In particular, for all $\kappa_1, \kappa_2 \in G3$, these dissimilarities remain unchanged when $S, \Sigma$ are replaced by $\kappa_1.S, \kappa_2.\Sigma$. For fixed $S,\Sigma$, the actual computation of these five dissimilarities requires the numerical diffeomorphic registration of the two cropped MV surfaces $S''=\rho_S.S$ and $\Sigma''=\rho_{\Sigma}.\Sigma$ geometrically derived from $S,\Sigma$ by specific $\rho_S, \rho_{\Sigma} \in G3$, as indicated in \secref{s:stronglyinvariant}.

To study empirically the distributions of these five dissimilarities, we have picked in $\mathcal{D}$ a random subset of 100 normal cases and 100 regurgitation cases, and implemented diffeomorphic registrations for three sets of $10^4$ pairs $(S,\Sigma)$, namely the three sets Nor/Reg, Reg/Reg, and Nor/Nor, corresponding to $\{$Normal vs. Regurgitation$\}$, $\{$Regurgitation vs. Regurgitation$\}$, and $\{$Normal vs. Normal$\}$, respectively. Within each one of these three sets, we have separately computed the histograms of our five dissimilarities \[\{\operatorname{kin}, \operatorname{quant}_{0.05}, \operatorname{quant}_{0.50}, \operatorname{quant}_{0.95}, \operatorname{quant}_{0.99}\}.\] See \figref{f:comp-all-dist}, where we display five figures in each of them three histograms. Here, the blue, red, and green bins correspond to histograms computed within the sets Nor/Nor, Reg/Reg, and Nor/Reg, respectively.

For the pairs of Nor/Nor cases, the blue histograms show that our five dissimilarities are mostly concentrated around small values, indicating a rather tight grouping of normal cases in ``dissimilarity space.'' For the pairs of Reg/Reg cases, the red histograms exhibit roughly larger values of our five dissimilarities, revealing a definitely looser grouping of regurgitation cases in ``dissimilarity space.'' But for the pairs of Nor/Reg cases the green histograms show that all five dissimilarities exhibit fairly high values, and hence indicate a potentially good separability in ``dissimilarity space.'' Moreover, the green histograms of quantiles dissimilarities $\operatorname{quant}_{\alpha}$  observed for pairs of Nor/Reg cases exhibit a marked increase in values when the percentile $\alpha$ increases from $0.05$ to $0.99$. This points to higher discriminating power of $\operatorname{quant}_\alpha$ for higher percentiles $\alpha$.

\begin{figure}
\centering
\includegraphics[width=\textwidth]{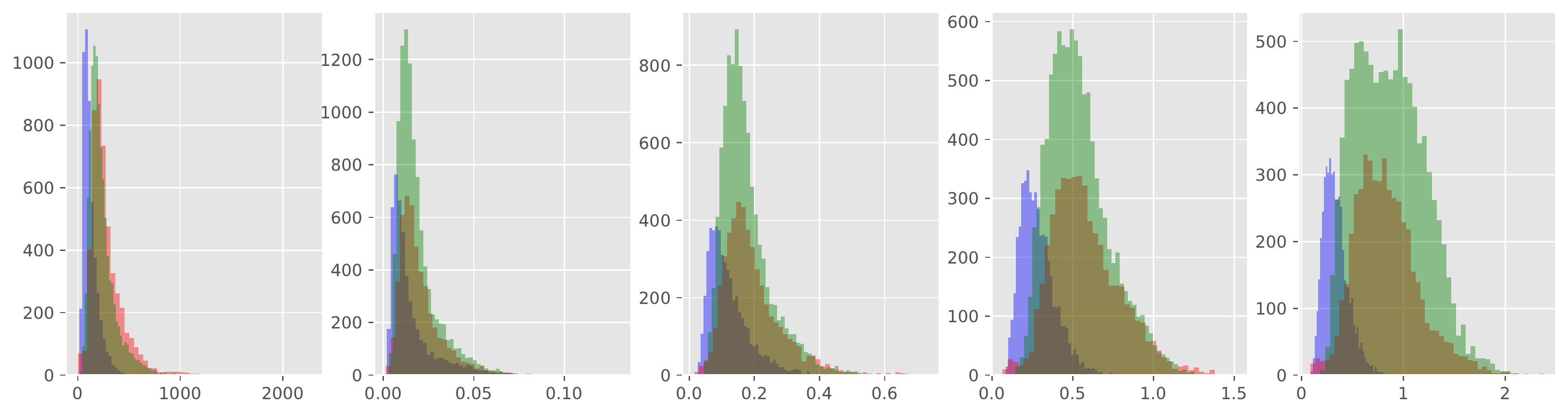}
\caption{The blue, red, and green bars correspond to the three sets Nor/Nor, Reg/Reg, Nor/Reg, respectively, of $10^4$ pairs $(S,\Sigma)$ in $\mathcal{D}$. The histograms of our five strongly invariant dissimilarities $\{\operatorname{kin}, \operatorname{quant}_{0.05}, \operatorname{quant}_{0.50}, \operatorname{quant}_{0.95}, \operatorname{quant}_{0.99}\}$ are shown from left to right. The y-axis represents the number of pairs}
\label{f:comp-all-dist}
\end{figure}

\subsubsection{Selection of Dissimilarities to Improve Discrimination}

Each cropped MV surface $S \in \mathcal{D}$ is discretized by 800 grid points $x(j) \in \mathbb{R}^3$. These points are precisely ordered in successive \emph{nested} rings surrounding the coaptation line. The last three rings $\mli{ring}_3, \mli{ring}_2, \mli{ring}_1$, contain  80 grid points each and are closer and closer  to the coaptation line. After diffeomorphic registration $F$ of $S$ onto another cropped MV snapshot $\Sigma$, the isotropic strain intensities $\text{isi}_x$ derived from $F$ at each point $x \in S$ define an \emph{isotropic strain intensity} vector $\mli{ISI}(S,\Sigma)$ of dimension 800, with coordinates  $\mli{ISI}_j(S,\Sigma)= \mli{isi}_{x(j)}]$, where  $j=1, \ldots, 800$. For $k=80,160,240,800$,  denote $\mli{lastISI}(k)$ the last $k$ strain intensities listed in $\mli{ISI}(S,\Sigma)$, so that $\mli{lastISI}(k)$ correspond to the strain values observed on $(\mli{ring}_1)$, $(\mli{ring}_1 + \mli{ring}_2)$, $(\mli{ring}_1+\mli{ring}_2+\mli{ring}_3)$, $(\text{the whole cropped surface}\; S)$, respectively.

When  $S$ is a regurgitation case and $\Sigma$ is a normal case, as shown in \figref{f:strian-vec}, the presence of gaps along the coaptation line of $\Sigma$  forces  the strain intensities $\mli{isi}_{x(j)}$ when the points $x(j)$ become closer to the coaptation line of $S$. This remark led us to focus on eight quantile dissimilarities, defined by
\begin{align} \label{med.top}
\mli{medstrain}_k(S,\Sigma)   & = 50\% \text{ quantile of } \mli{lastISI}(k), \\
\mli{highstrain}_k(S,\Sigma)  & = 95\% \text{ quantile of } \mli{lastISI}(k),
\end{align}

\noindent for  $k=80,160,240, 800$, respectively.

\begin{figure}
\centering
\includegraphics[width=.6\textwidth]{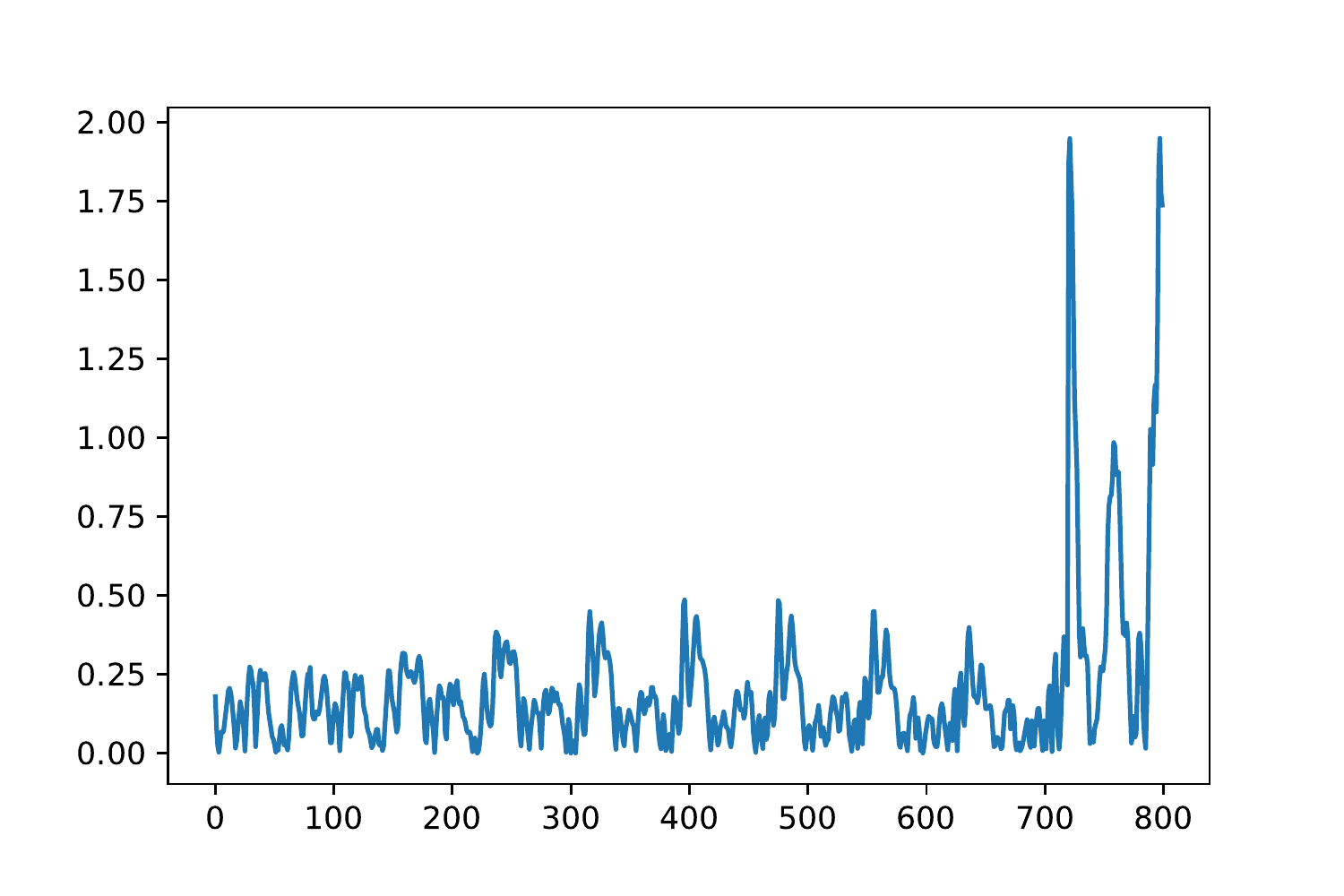}
\caption{An example of strain vector $\mli{ISI}(S,\Sigma)$ of dimension 800, where the cropped surfaces $S$ and $\Sigma$ are a regurgitation case and a normal case, respectively. The last 100 strain intensities correspond to the 100 points of $S$ closest to the coaptation line and clearly tend to have higher values.}
\label{f:strian-vec}
\end{figure}

Our  choice of 50\% and 95\% quantiles was motivated by the histograms analysis outlined in \secref{s:histograms}. For better interpretability, we have also replaced the strongly invariant kinetic energy dissimilarity $\operatorname{kin}(S,\Sigma)$ by its square root $\operatorname{sqrtkin}(S,\Sigma) = \sqrt{\operatorname{kin}(S,\Sigma)}$, which is a bona fide distance between smooth surfaces. Overall, from now on we consider only the nine strongly invariant dissimilarities
\begin{equation}\label{e:listdissim}
\begin{array}{lll}
D_1 =\mli{highstrain}_{80}, \quad&  D_2 = \mli{highstrain}_{160}, \quad& D_3 =\mli{highstrain}_{240},\\
D_4 =\mli{highstrain}_{800}, \quad& D_5 =\mli{medstrain}_{80},\quad& D_6 =\mli{medstrain}_{160}, \\
D_7 =\mli{medstrain}_{240}, \quad&D_8 = \mli{medstrain}_{800},\quad& D_9 =\mli{sqrtkin}.
\end{array}
\end{equation}

\subsubsection{Intrinsic Feature Vectors for Automatic Classification of MV Surfaces}

To implement automatic classification of ``regurgitation'' versus ``normal'' MV surfaces within the enriched dataset $\mathcal{D}$ of 800 cropped MV surfaces, we first construct intrinsic feature vectors based on strongly invariant dissimilarities. To this end, we apply the generic approach outlined in section \secref{s:intrinsic}.  Concretely, this involves two key steps:
\begin{enumerate*}
\item Select and fix a \emph{reference set} $\mli{REF} = \{\Sigma_1, \ldots, \Sigma_r\}$ of $r$ MV surfaces.
\item Fix the set of nine strongly invariant dissimilarities $\mli{DIS} = \left[D_1,D_2,\ldots,D_9\right]$ listed in~\eqref{e:listdissim}.
\end{enumerate*}

Each cropped MV surface $S$ in $\mathcal{D}$ will be described by the group $\operatorname{vec}(S)$ of  $9\times r$ \emph{intrinsic features}  defined by $\operatorname{vec}(S)_{i,j} = D_i(S,\Sigma_j)$, where $i =1, \ldots, 9$, and $j = 1, \ldots, r$. Our histogram analysis comparing several strongly invariant dissimilarities indicates that the dissimilarities $D_i(S,\Sigma)$ tend to be higher when the two MV surfaces $(S,\Sigma)$ are in different classes as compared to when $(S,\Sigma)$ are in the same class. This qualitative result indicates that for each $S$ in $\mathcal{D}$, the dissimilarities between $S$ and all the normal MV surfaces should play a key part to classify $S$ correctly by positively contributing to discrimination between ``regurgitation'' and ``normal.'' So this led us to select a reference set $\mli{REF}$ of $r=100$ normal MV surfaces randomly extracted from our set of all 400 cropped normal MV surfaces. The group of intrinsic feature vectors $\operatorname{vec}(S)$ then involves $900$ features to describe each surface $S$. We have explored other choices for the reference set $\mli{REF}$, as indicated further on. We now present our choice of ML classifiers.

\subsection{RF Classification}\label{s:results-rfclassification}

Among ML classifiers, RFs  have been applied widely, with quite convincing performance. Generated by simultaneous training of large sets of decision trees, RFs were introduced by~\cite{Ho:1995a} and popularized by~\cite{Breiman:2001a}, for instance, as well as by the fast emergence of efficient RF software. In RF training, each decision tree is trained on a randomly selected training set, and each entropy optimizing split of a tree node is based on a set of features randomly selected for each node. After training, the RF classifier combines the class predictions generated by each tree, usually by majority voting. For our dataset of 800 cropped MV surfaces, with each surface $S$ described by a group $\operatorname{vec}(S)$ of $900$ features as just outlined, we have automatically trained distinct RF classifiers, using the open source \texttt{rfpimp} software package~\cite{rfpimp-git,rfpimp} which offers flexible tools dedicated to  evaluating the importance of any given subgroup of features by randomly scrambling their values.

\subsubsection{Meta-Parameters of RF Classifiers}

Several well known meta-parameters have to be specified for RF training. After empirical exploration of potential choices, we have selected and fixed the following RF meta-parameters:
\begin{itemize}
\item {\bf Number of Trees:} The number of trees is set to 300. For each tree training, the random training set has size $\frac{2}{3} 800$. Class weights are used to compensate the imbalance of the random training set.
\item {\bf Node Splitting:} The node splitting is based on $30 = \sqrt{900}$ randomly selected features per node. Node impurity is quantified by its Gini index, which is given by $\alpha (1-\alpha)$, where $\alpha$ denotes the frequency of regurgitation cases per node. Node splitting is accepted only if the splitting decreases node impurity by at least $0.002$.
\end{itemize}

\subsubsection{RF Performance Evaluation by OOB Accuracy}

With the preceding specifications, for each fixed case $S$ in $\mathcal{D}$, there is a random  set $B(S)$ of trees whose training set does not include $S$. Here, the average size of $B(S)$ is roughly $37.4\%$ of $300$, i.e., $112$ trees. After training, each tree $\mli{TR}_i \in B(S)$ computes its own prediction $\mli{pred}_i(S)$ for the true  class $\mli{trueC}(S)$ of $S$; the out-of-bag ({\bf OOB}) prediction for $\mli{trueC}(S)$ is then the class $\hat{C}(S)$, which occurs most often among all the $\mli{pred}_i(S)$. The OOB accuracy of the RF classifier is the frequency of correct answers $\hat{C}(S) = \mli{trueC}(S)$ over all cases $S \in \mathcal{S}$. OOB accuracy is known to be a fairly robust estimator for the generalization capacity of the trained RF classifier (see \cite{Breiman:1996a}).

\subsubsection{OOB Accuracy Results for Benchmark Dataset}

For our dataset of 800 cropped MV surfaces, equally split between regurgitation and normal cases, each case is described by the 900 intrinsic features described above. Training a single  RF classifier having 300 trees and computing its OOB accuracy is actually quite fast. We have repeated this operation 1\,000 times, which provided 1\,000 \emph{distinct} RF classifiers, since RF training is a highly stochastic procedure. The OOB accuracies of these 1\,000 RF classifiers range between $96.1 \%$ and $97.6\%$. We display the histogram of OOB accuracies for these 1\,000 RF classifiers in \figref{f:oob-accuracies}.

\begin{figure}
\includegraphics[width=0.5\textwidth]{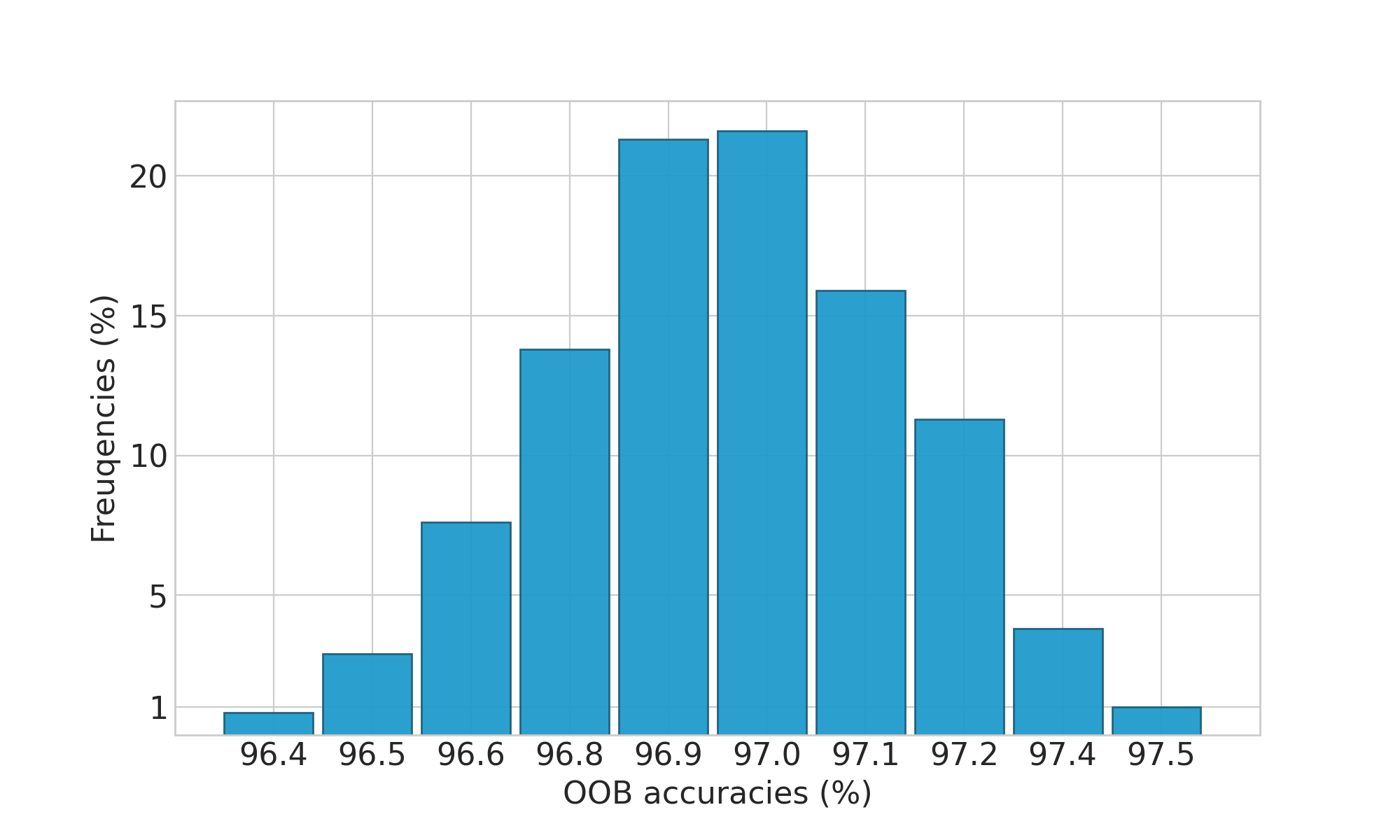}
\caption{For  our benchmark dataset of 800  MV surfaces, we study automatically classification of   ``regurgitation'' MV surfaces versus ``Normal'' MV surfaces. We have derived 900 G3-invariant feature vectors computed via diffeomorphic registration between pairs of surfaces. We have separately trained 1\,000  distinct RF classifiers. The histogram of their OOB accuracies is displayed here, with high  OOB accuracies ranging from  96.1\% to 97.6\%.\label{f:oob-accuracies}}
\end{figure}

The observed generality of RF classifiers with quite high OOB accuracies indicates that our intrinsic features based on strongly invariant dissimilarities between surfaces are quite efficient. The best of our 1\,000 RF classifiers, which we now denote $\mli{RF}^*$, has global OOB accuracy of $97.6\%$. The $2\times 2$ confusion matrix $\mli{CONF}$ of $\mli{RF}^*$ is given by
\[
\mli{CONF} =
\begin{bmatrix}
97.25\% & 2.75\% \\
2.25\% & 97.75\%
\end{bmatrix},
\]

\noindent where $\mli{CONF}_{1,1} = 97.25\% $ and $\mli{CONF}_{2,2} = 97.75\% $ are  the OOB percentages of correctly classified cases, among  regurgitation and normal cases, respectively. As is well known, one can further improve classification accuracy by for instance ``bagging'' the five  best RF classifiers via a standard majority vote. But we have preferred to study more precisely the geometric dissimilarities, which have the strongest impact on $RF^*$.

\subsubsection{Importance Evaluation for Key Groups of Features}

Recall that we had selected (see~\eqref{e:listdissim}) a set of nine strongly invariant dissimilarities $D_1,\ldots, D_9$ computable by diffeomorphic matching between pairs of surfaces $(S1,S2)$, namely, $\mli{highstrain}_{80}$, $\mli{highstrain}_{160}$, $\mli{highstrain}_{240}$, $\mli{highstrain}_{800}$, $\mli{medstrain}_{80}$, $\mli{medstrain}_{160}$, $\mli{medstrain}_{240}$, $\mli{medstrain}_{800}$, $\mli{sqrtkin}$.\footnote{The first four $D_i$ are the $95\%$-quantiles of strain intensities observed on four increasingly larger neighborhoods of the coaptation line on our MV surfaces; the next four $D_i$ are the corresponding $50\%$-quantiles of strain intensities; the last $D_9$ is the square root of the kinetic energy required for optimal diffeomorphic matching of $(S1,S2)$.} For each $i=1 \ldots 9$, the dissimilarity $D_i$ determines a group $\Gamma_i$ of 100 features  $\operatorname{vec}_{i,j}(S) = D_i(S,\Sigma_j)$, where $ j=1 \ldots 100$. A well known technique to evaluate concretely the \emph{importance} $\mli{IMP}_i$ of the group of features $\Gamma_i$  is to compute the loss in OOB accuracy for each feature in $\Gamma_i$, one scrambles the 800 values $D_i(S,\Sigma_j)$ over all $S \in \mathcal{D}$. To implement this random scrambling, one selects 100 random permutations $\mli{PER}_1,\ldots, \mli{PER}_{100}$ of our dataset of 800 cases, and then one replaces each $D_i(S,\Sigma_j)$ by $D_i(\mli{PER}_j(S),\Sigma_j)$. This massive random scrambling, done for the fixed sub group of features $\Gamma_i$ naturally yields a decrease in OOB accuracy for the already trained classifier $\mli{RF}^*$. For a more precise estimate of this accuracy decrease, the scrambling operation is repeated 100 times for each subgroup $\Gamma_i$ and one computes the average OOB accuracy decrease $\mli{IMP}_{i}$,  which then quantifies  the importance of the features subgroup $\Gamma_i$. The importances $\mli{IMP}_{i}$ of our nine subgroups of features $\Gamma_i$ are reported in decreasing order in \tabref{t:imp}, which, hence, also ranks our nine dissimilarities $D_i$ by decreasing importance for our discrimination task. These results identify the three most important dissimilarities $D_1,D_2,D_9$, with importances $28\%$, $13.5\%$, and $11\%$, respectively. The top two are the $95\%$-quantiles of strain intensities $D_1=\mli{highstrain}_{80}$ and $D_2=\mli{highstrain}_{160}$ focused n the 80 and 160  points closest to the coaptation line, respectively. The third top dissimilarity is the kinetic energy $D_9 = \mli{sqrtkin}$.

\begin{table}
\caption{Importances $\mli{IMP}_{i}$ of our nine subgroups of features $\Gamma_i$.\label{t:imp}}
\begin{tabular}{lr}\toprule
\bf Dissimilarity        & \bf Importance \\
\midrule
$\mli{highstrain}_{80}$  & 28.0\% \\
$\mli{highstrain}_{160}$ & 13.5\% \\
$\mli{sqrtkin}$          & 11.0\% \\
$\mli{highstrain}_{240}$ &  9.0\% \\
$\mli{highstrain}_{800}$ &  8.5\% \\
$\mli{medstrain}_{160}$  &  8.5\% \\
$\mli{medstrain}_{80}$   &  7.5\% \\
$\mli{medstrain}_{800}$  &  7.0\% \\
\bottomrule
\end{tabular}
\end{table}

The preceding importance analysis led us to also implement another RF classification based on only on the three most important subgroups of intrinsic features, namely, the three groups of 100 features each defined by the dissimilarities $D_1=\mli{highstrain}_{80}, D_2=\mli{highstrain}_{160}$ and the kinetic energy $D_9= \mli{sqrtkin}$. We trained 100 new RF classifiers based only on these three groups of 100 intrinsic features, and the best new $\mli{RF}^{**}$ classifier based on this reduced set of  300 intrinsic features  reached a global OOB accuracy of $96.7\%$, which is very close to the $97.6\%$ OOB accuracy of the best classifier $\mli{RF}^*$ based on 9 dissimilarities and the associated 900 intrinsic features. Thus, to discriminate efficiently between regurgitation and normal MVs, the strongest informations derived from dissimilarities based on diffeomorphic registration are clearly provided by the $95\%$-quantiles of the strain values observed very close to the coaptation line, and are efficiently completed by the kinetic energy of diffeomorphic registration between pairs of MV surfaces.

\subsection{Computing Times}\label{s:computing-times}

Most of the computing time involved in the preceding benchmark application was consumed by implementation of diffeomorphic registration for about $800(100) = 80\,000$ pairs of cropped MV surfaces, which took about 80 hours of computing time on 80 nodes of the Opuntia cluster of the Computing Center at the University of Houston, as detailed below. After all diffeomorphic registrations were completed, automatic training of any RF classifiers was quite fast, with a runtime inferior to 30 seconds for each such training. This short RF training time was to be expected since we had only 800 cases and 300 trees.

Recall that our cropped MV surfaces were all discretized by 800 points triangulated grids. On a laptop with 1.4 GHz Quad-Core Intel Core i5, 16 GB memory, 2133 MHz LPDDR3, the diffeomorphic registration of one pair of MV surfaces using our solver requires an average computing time of 20 seconds. The large number of such diffeomorphic registrations needed here were naturally distributed by deploying our solver on a cluster. For our benchmark dataset of 800 cropped MV surfaces, our solver had to be executed for 80\,000 pairs of MV surfaces, requiring about 2\,493\,min for parallel computing on 80 nodes used at full availability. Since at any given time we only had simultaneous use of about 40 nodes, completing these 80\,000 diffeomorphic registrations took about 80 hours of computing time.

\section{Conclusions and Future Work}\label{s:conclusions}
Given any dataset $\mathcal{D}$ of smooth 3D-surfaces partitioned into a finite set of disjoint classes, implementing automatic class prediction by supervised ML classifiers such as RFs~\cite{Cutler:2012a,Breiman:2001a}, MLPs~\cite{Rosenblatt:1957a}, or SVMs~\cite{Cortes:1995a} usually requires the characterization of every  surface $S \in \mathcal{D}$ by a computable feature vector $\operatorname{vec}(S) \in \mathbb{R}^N$, for some fixed $N$. In this paper,  we use diffeomorphic registration of smooth surfaces to develop several algorithms dedicated to efficient implementation of ML for automatic classification of smooth 3D-surfaces. In earlier research, we have developed a software solver to compute a diffeomorphic registration $f: S \to S'$ for any given pair $(S,S')$ of smooth 3D-surfaces, with  $f$ embedded in a smooth diffeomorphic flow with minimal kinetic energy $\operatorname{kin}(S,S')$~\cite{Zhang:2021a,Azencott:2010a}. We also compute the isotropic strain of $f$ at each $x \in S$, and several  quantiles $\operatorname{quant}_{\alpha}(S,S')$ of all these  strain values. From  each $\operatorname{quant}_{\alpha}(S,S')$, as well as from $\operatorname{kin}(S,S')$, we derive a $G3$-invariant dissimilarity $\operatorname{dis}(S,S')$ between surfaces $S$ and $S'$. Here, $G3$ is the group of  mappings from $\mathbb{R}^3$ to $\mathbb{R}^3$, generated by translations, rotations, and homotheties. We have then  outlined  how this family of dissimilarities $\operatorname{dis}(S,S')$ can generate natural feature vectors $\operatorname{vec}(S) \in \mathbb{R}^N$ defined for all surfaces $S \in \mathcal{D}$, and invariant by the group $G3$.  This is an efficient first step to apply ML for supervised automatic classification in $\mathcal{D}$.

Moreover, since  unbalanced  class sizes tend to degrade classification accuracy by standard ML approaches, we develop two class enrichment algorithms derived from optimized diffeomorphic registration. Given two  surfaces $S,S' \in \mathcal{D}$  belonging to the same given class $\mli{CL}$ we need to enrich, numerical diffeomorphic registration of $S$ and $S'$ automatically computes a set of surfaces $S_t \in \mathcal{D}$ depending smoothly on $t \in (0,1)$, which are all diffeomorphic to $S$,  and verify $S_0 = S, S_1 = S'$. Then, if $S,S'$ are sufficiently close to each other, any interpolating surface $S_t$ can be added to class $\mli{CL}$ as a virtual new case. This  interpolation technique enables a powerful extension of the well known SMOTE enrichment technique which is restricted to linear interpolation between euclidean vectors.  For class enrichment in $\mathcal{D}$, we have also developed and implemented small random perturbations of smooth 3D surfaces by applying flows of \emph{random smooth diffeomorphisms} of $\mathbb{R}^3$, generated by time integration of smooth Gaussian random vector fields indexed by $\mathbb{R}^3$. Numerical implementation required a sophisticated convergence analysis for stochastic series expansions of smooth Gaussian vector fields.

We have successfully applied all the preceding tools and methodologies  to  the automatic classification of 800 MV  surfaces into two classes: ``regurgitation'' versus ``normal.'' The $G3$-invariant feature vectors we computed had dimension 900 and were derived from nine distinct dissimilarities (kinetic energy and eight strain quantiles) . The well known RF classifiers (here used with 300 trees) reached a high classification accuracy of $97.6\%$. A comparative importance analysis between 9 groups of features  showed that the 3 most important dissimilarities were the $95\%$-quantiles of isotropic strain observed near the coaptation line of each MV surface and the kinetic energy. In fact, after reduction of our feature vectors to the 300 features associated to the three most important dissimilarities, the RF classification accuracy was still quite high at $96.7\%$.

In ongoing research, we will apply the proposed techniques to automatic classification of much larger datasets of 3D smooth surfaces, into dozens of classes. This will require the development of ML techniques specifically dedicated to reducing the computing times required by large numbers of diffeomorphic registrations.

\noindent\textbf{Acknowledgements:} This work was partly supported by the National Science Foundation under the awards DMS-1854853, DMS-2012825, and DMS-2145845. Any opinions, findings, and conclusions or recommendations expressed herein are those of the authors and do not necessarily reflect the views of NSF. This work was completed in part with resources provided by the Research Computing Data Core at the University of Houston.

\begin{appendix}

\section{Smooth one-dimensional  Gaussian Random Fields on $\mathbb{R}^3$}
\label{s:onedim.gaussian}
For  $t \in \mathbb{R}$, the density function $\frac{1}{\sqrt{\pi}}\exp(-t^2) $ defines a measure $\gamma$ of mass 2. Let $\Gamma = L_2(\mathbb{R}, \gamma)$. The Gaussian kernel $g(t,t') = \exp(-(t-t')^2)$ defines a linear operator $G: \Gamma \to \Gamma$ given by, for  $\phi \in \Gamma$,
\[
G\phi(t) = \int_{\mathbb{R}} g(t,t')\phi(t') \;\mathrm{d}\gamma(t') \;\; \text{for all} \;\;  t \in \mathbb{R}.
\]

\noindent Then $G$ is a Hilbert-Schmidt operator with known eigenvalues $\lambda_n$ and eigenfunctions $\phi_n$, which form an orthonormal basis of $\Gamma$. One has the converging series expansions
\begin{align}
G\phi (t) =\sum_{n \geq 0} \lambda_n \langle \phi_n, \phi \rangle_{\Gamma} \phi_n(t)\;\; \text{for all} \;\; \phi \in \Gamma,\label{e:gphi}\\
g(t,t')=\sum_{n \geq 0}\lambda_n \phi_n(t)\phi_n(t') \;\; \text{for all} \;\; t,t' \in \mathbb{R}.\label{e:expan}
\end{align}

The $\phi_n$ are expressed below in terms of the Hermite polynomials $H_n(t)$, which are recursively given by $H_0(t) = 1$ and
\begin{equation}\label{e:hermite}
H_{n+1}(t) = 2 t H_n(t) -2 n H_{n-1}(t)
\end {equation}

\noindent for all $t \in \mathbb{R}$, $n = 0, 1, \ldots$. Each $H_n(t)$ has degree $n$ and  leading term $(2t)^n$. The $\lambda_n$ and $\phi_n$ are given by the known formulas (see \cite{fasshauer2011positive})
\begin{equation}
\label{e:eigen}
\lambda_n = a^{n + 1/2}
\quad \text{and}\quad
\phi_n(t) = \frac{b}{\sqrt{2^n n!}} \exp(-c t^2) H_n(h t)
\end{equation}

\noindent where $a =1/(1 +\frac{\sqrt{5}}{2}) < 1/2$, $b= 5^{1/8}$, $c= (\sqrt{5}-1)/2$, and $h = 5^{1/4}$. Rewrite the radial kernel $k(x,y) = \exp(- |x-y|^2)$, $x,y \in \mathbb{R}^3$, as follows
\begin{equation} \label{e:kxy}
k(x,y) = g(x_1,y_1) g(x_2,y_2) g(x_3,y_3),
\end{equation}

\noindent where $x,y$ have coordinates $x_j, y_j$. Endow $\mathbb{R}^3$ with the product measure $\theta = \gamma^3$. Define the Hilbert-Schmidt operator $\mli{HS}$ from $L_2(\mathbb{R}^3, \theta)$ into itself by
\[
\mli{HS}\psi(x)=\int_{\mathbb{R}^3} k(x,y)\psi(y)\;\mathrm{d}\theta(y) \;\; \text{for all functions} \; \psi \in L_2(\mathbb{R}^3, \theta).
\]

\noindent The eigenvalues and eigenfunctions of $\mli{HS}$ are clearly given by
\begin{align}
\tau_{m,n,p} & = \lambda_m  \lambda_n  \lambda_p  \label{e:eigenHS} \\
\psi_{m,n,p}(x) & = \phi_m(x_1) \phi_n(x_2) \phi_p(x_3)\label{e:eigenpsi}
\end{align}

\noindent for all integers $m,n,p$ and all $x=[x_1,x_2,x_3] \in \mathbb{R}^3$. The $\psi_{m,n,p}$ provide an orthonormal basis for $L_2(\mathbb{R}^3, \theta)$, and one has the converging series expansions
\begin{align}
\mli{HS}\psi &= \sum_{m,n,p} \tau_{m,n,p} \langle \psi_{m,n,p}, \psi \rangle_{L_2(\mathbb{R}^3, \theta)} \psi_{m,n,p} \;\; \text{for all} \;\; \psi \in L_2(\mathbb{R}^3, \theta),\label{e:Hpsi} \\
k(x,y) &= \sum_{m,n,p} \tau_{m,n,p} \psi_{m,n,p}(x) \psi_{m,n,p}(y) \;\; \text{for all} \;\; x,y \in \mathbb{R}^3. \label{e:expandkxy}
\end{align}

We now concretely construct an explicit stochastic series converging to a  smooth $\mathbb{R}$ valued random Gaussian vector field $U : \mathbb{R}^3 \to \mathbb{R}$ with mean 0 and covariance kernel $k(x,y)$. The construction is summarized in \thmref{thm:convergence}. The speed of convergence for this series is studied in \thmref{thm:convspeed}. The proofs of these two theorems are given below in section \secref{s:proofs}).

\begin{theorem}\label{thm:convergence}
Let $Z_{m,n,p}$ be any sequence of standard independent Gaussian random variables, indexed by the integer triplets $m,n,p$, and defined on the same probability space $(\Omega,P)$. Then $P$-almost surely, the following stochastic series converges pointwise for all $x \in \mathbb{R}^3$ to a finite limit denoted
\begin{equation}\label{e:U}
U_x = \sum_{m,n,p} Z_{m,n,p} \sqrt{\tau_{m,n,p}}\, \psi_{m,n,p}(x)
\end{equation}

The one dimensional random vector field $U_x  \in\mathbb{R}$ defined by this series is Gaussian  with mean 0 and covariance kernel $E(U_x U_y) = k(x,y)$ for all $x,y \in \mathbb{R}^3$. The function $x \mapsto U_x$ is in $L_2(\mathbb{R}^3,\theta)$, and is $P$-almost surely smooth in $x \in \mathbb{R}^3$. Moreover, the series \eqref{e:U} also converges to $U$ in $L_2(\mathbb{R}^3, \theta)$-norm.
\end{theorem}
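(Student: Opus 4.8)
The plan is to treat the five assertions in the order in which they build on one another, deriving everything from two elementary facts about the eigendata \eqref{e:eigen}: the summability of the eigenvalues and the diagonal of the Mercer expansion. First I would record that $\sum_{m,n,p}\tau_{m,n,p} = \left(\sum_{n\ge 0}\lambda_n\right)^{3} = \left(\sqrt{a}/(1-a)\right)^{3} < \infty$ because $0 < a < 1/2$, and that, specializing \eqref{e:expan} to $t' = t$, one gets $\sum_{n\ge 0}\lambda_n\phi_n(t)^2 = g(t,t) = 1$, hence $\sum_{m,n,p}\tau_{m,n,p}\,\psi_{m,n,p}(x)^2 = k(x,x) = 1$ for every $x \in \mathbb{R}^3$. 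This last identity shows that, for each fixed $x$, the series \eqref{e:U} is a sum of independent centered Gaussian variables whose variances sum to the finite value $1$.

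Given this, pointwise a.s.\ convergence and the covariance structure follow from standard facts. For fixed $x$, the Kolmogorov three–series theorem (equivalently, $L^2$-bounded martingale convergence) gives a.s.\ and $L^2(\Omega)$ convergence of \eqref{e:U} to a centered Gaussian $U_x$ of variance $1$. Computing $E[U_x U_y]$ as the $L^2(\Omega)$ limit of the covariances of the partial sums, and using independence and $E[Z_{m,n,p}^2]=1$, yields $E[U_x U_y] = \sum_{m,n,p}\tau_{m,n,p}\psi_{m,n,p}(x)\psi_{m,n,p}(y)$, which equals $k(x,y)$ by \eqref{e:expandkxy}. Joint Gaussianity of any finite family $(U_{x^{(1)}},\dots,U_{x^{(r)}})$ follows because each component is an $L^2(\Omega)$ limit of jointly Gaussian partial sums and Gaussian vectors are closed under $L^2$ limits. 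The two $L_2(\mathbb{R}^3,\theta)$ claims are equally direct: since the $\psi_{m,n,p}$ are orthonormal in $L_2(\mathbb{R}^3,\theta)$ and $\theta$ is a finite measure, $E\,\|U\|_{L_2(\theta)}^2 = \int_{\mathbb{R}^3} k(x,x)\,\mathrm d\theta(x) = \theta(\mathbb{R}^3) < \infty$, so $U \in L_2(\mathbb{R}^3,\theta)$ almost surely; and, writing $S_N$ for the partial sum over any exhausting index family $F_N$, one has $E\,\|U - S_N\|_{L_2(\theta)}^2 = \sum_{(m,n,p)\notin F_N}\tau_{m,n,p}\to 0$, giving convergence in $L^2(P\otimes\theta)$.

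The delicate part is the almost sure \emph{simultaneous} convergence for all $x$ together with smoothness, and here the specific constants in \eqref{e:eigen} do the work. The key deterministic estimate I would prove is a bound on the eigenfunctions and their derivatives that is uniform in the index on each cube $[-R,R]^3$: using Cramér's uniform bound $|e^{-s^2/2}H_n(s)|\le 2^{n/2}\sqrt{n!}$ for the Hermite functions, together with the exact cancellation $h^2/2 - c = \sqrt5/2 - (\sqrt5-1)/2 = 1/2$ forced by \eqref{e:eigen}, the Gaussian prefactor $e^{-ct^2}$ in $\phi_n$ precisely absorbs the $2^{n/2}\sqrt{n!}$ growth, leaving $|\phi_n(t)| \le C\, e^{t^2/2}$ \emph{uniformly in $n$}. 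Differentiating via the Hermite identity $H_n' = 2nH_{n-1}$ then introduces only a polynomial factor, giving $|\phi_n^{(k)}(t)| \le C_{k,R}(1+n)^{k}$ on $[-R,R]$, and hence $|\partial^\alpha\psi_{m,n,p}(x)| \le C_{\alpha,R}(1+m+n+p)^{|\alpha|}$ on $[-R,R]^3$.

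With this estimate the remainder is a Weierstrass $M$-test with a random majorant. On $[-R,R]^3$ the $\alpha$-th term-by-term derivative series is dominated termwise by $|Z_{m,n,p}|\,a^{(m+n+p)/2}\,C_{\alpha,R}(1+m+n+p)^{|\alpha|}$, and since $E|Z_{m,n,p}| = \sqrt{2/\pi}$ while the geometric decay $a^{(m+n+p)/2}$ dominates the polynomial growth, the expected sum of this majorant is finite; hence the majorant is a.s.\ finite and the $\alpha$-th derivative series converges uniformly on $[-R,R]^3$ almost surely. Intersecting the resulting full-measure events over the countable family of $R\in\mathbb N$ and multi-indices $\alpha$ produces a single event of probability $1$ on which \eqref{e:U} and all its term-by-term derivative series converge uniformly on compacta; the classical theorem on differentiation of uniformly convergent series then shows that $x\mapsto U_x$ is almost surely smooth and that \eqref{e:U} converges pointwise for every $x$, completing the proof. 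I expect the verification of the uniform Hermite bounds — in particular pinning down that $e^{-ct^2}$ exactly cancels the $2^{n/2}\sqrt{n!}$ growth — to be the main obstacle; the probabilistic steps are routine once it is in place.
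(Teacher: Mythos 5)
Your proposal is correct, and it reaches the theorem by a route that differs from the paper's in three substantive places. First, for the eigenfunction bounds the paper invokes Krasikov's estimates for Hermite polynomials together with Stirling's formula to obtain $|\phi_n(t)| < 30\, n^{1/2} e^{t^2/2}$ for $n \geq 6$, whereas you use Cram\'er's inequality $|H_n(s)|e^{-s^2/2} \leq K\, 2^{n/2}\sqrt{n!}$, which after the same cancellation $h^2/2 - c = 1/2$ yields the cleaner bound $|\phi_n(t)| \leq bK\, e^{t^2/2}$, uniform in $n$; either bound suffices here because $\sqrt{\tau_{m,n,p}}$ decays geometrically. Second, for almost sure convergence simultaneously in $x$, the paper runs a Borel--Cantelli argument over the events $\{|Z_{m,n,p}| \leq 5\sqrt{\log(mnp)}\}$ with the explicit probability lower bound $q(N) > 1 - \tfrac{1}{5N^{10}}$, while you simply note that your random majorant has finite expectation (Tonelli) and is therefore a.s.\ finite. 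Your argument is shorter and entirely adequate for \thmref{thm:convergence}; the paper's heavier quantitative machinery is not wasted, however, since it is exactly what is reused to prove the convergence-speed estimate of \thmref{thm:convspeed}, which a first-moment argument cannot deliver. Third, and most notably, for smoothness the paper appeals to the literature: since $k(x,y)$ is smooth, there exists a smooth \emph{version} of the Gaussian field. Strictly speaking that establishes existence of a smooth modification, which is weaker than the stated claim that the sum of the series itself is $P$-a.s.\ smooth. Your term-by-term differentiation, with uniform convergence of every derivative series on compacta and a single full-measure event obtained by intersecting over countably many $R\in\mathbb{N}$ and multi-indices $\alpha$, proves the stated claim directly and keeps the appendix self-contained.

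One small discrepancy: for the last assertion of the theorem you prove $E\,\|U - S_N\|_{L_2(\theta)}^2 \to 0$, i.e.\ convergence in $L^2(P\otimes\theta)$, whereas the paper establishes that $P$-a.s.\ the partial sums converge in $L_2(\mathbb{R}^3,\theta)$-norm, via a.s.\ summability of the norms $\|T_{m,n,p}\|_{L_2(\theta)} = |Z_{m,n,p}|\sqrt{\tau_{m,n,p}}$. Your own majorant argument closes this at no extra cost: since $\|\psi_{m,n,p}\|_{L_2(\theta)} = 1$, the sum $\sum_{m,n,p} |Z_{m,n,p}|\sqrt{\tau_{m,n,p}}$ has finite expectation, hence is a.s.\ finite, giving a.s.\ absolute convergence of the series in $L_2(\theta)$-norm.
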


\begin{theorem}\label{thm:convspeed}
The partial sums $U(N)_x$ of the series $U_x$ are denoted
\begin{equation}\label{e:UN}
U(N)_x =  \sum_{(m,n,p) \in B(N)} Z_{m,n,p} \sqrt{\tau_{m,n,p}} \psi_{m,n,p}(x)
\end{equation}

\noindent where $B(N) = \{m,n,p | \max(m, n, p) \leq N\}$. With  probability $q(N) > 1 - \frac{1}{5 N^{10}}$, one has the uniform bound
\begin{equation}\label{e:remainder}
|U_x - U(N)_x | \leq  c \exp(|x|^2/2) N^{7/10}/2^N
\end{equation}

\noindent for all $x \in \mathbb{R}^3$ and all $N >6$, where $c$ is a universal constant, which does not depend on $N$ nor on $x$.
\end{theorem}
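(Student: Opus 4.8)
The plan is to bound the random remainder $R_N(x) \defeq U_x - U(N)_x = \sum_{\max(m,n,p)>N} Z_{m,n,p}\sqrt{\tau_{m,n,p}}\,\psi_{m,n,p}(x)$ (whose a.s.\ convergence is granted by \thmref{thm:convergence}) by separating a deterministic, uniform-in-$x$ envelope for the basis functions from a purely probabilistic control of the Gaussian coefficients. The first step is to remove all $x$-dependence. Using the classical Cram\'er uniform bound for $L_2$-normalized Hermite functions, $\exp(-s^2/2)|H_n(s)| \le K\,2^{n/2}\sqrt{n!}$ with $K$ universal, together with \eqref{e:eigen} and the arithmetic identity $-c + h^2/2 = -(\sqrt5-1)/2 + \sqrt5/2 = 1/2$, one obtains a factor bound that is \emph{uniform in} $n$, namely $|\phi_n(t)| \le C\exp(t^2/2)$. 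By \eqref{e:eigenpsi} this gives $|\psi_{m,n,p}(x)| \le C_0\exp(|x|^2/2)$ for a universal $C_0$, uniformly in $(m,n,p)$ and $x$. Hence, by the triangle inequality,
\[
|R_N(x)| \le C_0 \exp(|x|^2/2)\, S_N, \qquad S_N \defeq \sum_{\max(m,n,p)>N} |Z_{m,n,p}|\,\sqrt{\tau_{m,n,p}}.
\]
This produces the factor $\exp(|x|^2/2)$ of \eqref{e:remainder} at once and reduces the problem to bounding the single random variable $S_N$, which no longer depends on $x$.

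The second step records the geometric decay of the coefficients. By \eqref{e:eigenHS} and \eqref{e:eigen}, $\sqrt{\tau_{m,n,p}} = a^{(m+n+p)/2 + 3/4}$ with $a<1/2$. Writing $r \defeq a^{1/2}$ and splitting the tail by the value $k=\max(m,n,p)>N$, the deterministic level sums telescope cleanly: $\sum_{\max=k}\sqrt{\tau_{m,n,p}} = a^{3/4}\big[(\tfrac{1-r^{k+1}}{1-r})^3 - (\tfrac{1-r^{k}}{1-r})^3\big]$, and since the difference of cubes factors as $r^{k}$ times a bounded term, each level sum is of order $r^{k}$ \emph{with no accompanying polynomial factor}. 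Summing the geometric series over $k>N$ shows $\sum_{\max>N}\sqrt{\tau_{m,n,p}}$ is of order $a^{N/2}$, an exponentially small quantity (recall $a<1/2$) that furnishes the exponential factor of \eqref{e:remainder}.

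The third step is the probabilistic control of $S_N$ on an event of probability at least $q(N)>1-\tfrac{1}{5N^{10}}$. I would define the good event by imposing level-dependent thresholds $|Z_{m,n,p}| \le \theta_k$ for every $(m,n,p)$ with $k=\max(m,n,p)>N$, choosing $\theta_k$ of size $\theta_k \asymp \sqrt{\ln k}$. Using the Gaussian tail estimate $P(|Z|>\theta) \le 2\exp(-\theta^2/2)$ and the count $|\{\max=k\}| = O(k^2)$, the failure probability is at most $\sum_{k>N} O(k^2)\exp(-\theta_k^2/2)$; tuning the constant in $\theta_k$ makes this at most $\tfrac{1}{5N^{10}}$, which fixes $q(N)$. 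On the complementary event, $S_N \le \sum_{k>N}\theta_k \sum_{\max=k}\sqrt{\tau_{m,n,p}} \lesssim \sqrt{\ln N}\; a^{N/2}$ by the clean level sums of the second step. Since $\sqrt{\ln N} \le N^{7/10}$ for all $N>6$, the slowly growing threshold contribution is absorbed into the polynomial factor $N^{7/10}$, and combining with the first step yields \eqref{e:remainder} with a universal constant $c$.

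\textbf{Main obstacle.} The delicate point is the simultaneous, uniform-in-$x$ control of infinitely many Gaussian coefficients at the prescribed probability level. One must choose the thresholds $\theta_k$ so that the union bound over the \emph{infinite} tail $\{\max>N\}$ converges to a quantity as small as $\tfrac{1}{5N^{10}}$, while at the same time the weighted sum $\sum \theta_k\sqrt{\tau_{m,n,p}}$ remains geometric in $N$; this is precisely what forces the polynomial-free level sums of the second step and the sharp uniform Hermite bound of the first. Securing Cram\'er's inequality in the normalization of \eqref{e:eigen} (so that the exponent collapses to exactly $|x|^2/2$ rather than a larger multiple of $|x|^2$) and verifying that the level sums generate no spurious powers of $N$ are the two technical linchpins; once these are in place, the geometric decay leaves ample room to absorb both the $\sqrt{\ln N}$ threshold factor into $N^{7/10}$ and the union-bound cost into the probability $\tfrac{1}{5N^{10}}$.
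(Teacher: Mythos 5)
Your architecture differs from the paper's in several attractive ways: you invoke Cram\'er's uniform bound for Hermite functions to get $|\phi_n(t)| \le C e^{t^2/2}$ with no polynomial factor in $n$ (the paper instead uses Krasikov-type bounds and carries a factor $n^{1/2}$, hence $(mnp)^{1/2}$, through the whole estimate), you organize the tail by shells $\max(m,n,p)=k$ whose coefficient sums are cleanly of order $r^k$, and you control the Gaussian coefficients by a union bound with level-dependent thresholds $\theta_k \asymp \sqrt{\ln k}$ rather than the paper's product-indexed events $|Z_{m,n,p}|\le 5\sqrt{\log(mnp)}$. Those parts are sound, and indexing the good event by $\max(m,n,p)>N$ is in fact better matched to the tail $G(N)$ being estimated than the paper's condition $mnp>N$.

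The genuine gap is the exponential rate. By \eqref{e:eigenHS} and \eqref{e:eigen}, $\sqrt{\tau_{m,n,p}} = a^{(m+n+p)/2+3/4}$ with $a = 1/(1+\tfrac{\sqrt{5}}{2}) \approx 0.472$, so your shell sums are of order $r^k$ with $r=\sqrt{a}\approx 0.687$, and your tail $S_N$ is of order $a^{N/2}=(0.687\ldots)^N$. Your parenthetical ``recall $a<1/2$'' conflates $a<1/2$ with $\sqrt{a}<1/2$; the latter is false (it would require $a\le 1/4$), so $\sqrt{\ln N}\,a^{N/2}$ is exponentially \emph{larger} than the claimed $N^{7/10}2^{-N}$ and cannot be absorbed into it. As written, your argument proves only the weaker (still exponentially small) bound $|U_x-U(N)_x| \lesssim e^{|x|^2/2}\sqrt{\ln N}\,a^{N/2}$, not \eqref{e:remainder}. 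For what it is worth, the paper's own proof reaches $2^{-N}$ only through its inequality \eqref{e:taubound}, $\sqrt{\tau_{m,n,p}} < 2^{-(m+n+p)}$, which does not follow from $\lambda_n<2^{-n}$ (that yields only $\sqrt{\tau_{m,n,p}} < 2^{-(m+n+p)/2}$); your exact computation of $\sqrt{\tau_{m,n,p}}$ actually exposes that the rate $2^{-N}$ in \eqref{e:remainder} is not supported by either argument --- the honest rate is roughly $a^{N/2}\approx 2^{-N/2}$ up to logarithmic factors. So either the exponent in the statement should be weakened accordingly, or a genuinely sharper handling of the eigenvalue decay is required; your proof does not close this discrepancy, and neither does the paper's.
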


\section{Numerical Implementation of  Random Diffeomorphic Deformations} \label{s:random}

Select independent standard Gaussian random variables $Z^j_{m,n,p}$ indexed by $j=1, 2,3$ and by all the triplets $(m,n,p)$. Then as in \eqref{e:U}, define on $\mathbb{R}^3$ three independent $\mathbb{R}$-valued \emph{smooth} Gaussian random fields $U^j, j=1,2,3$,  by the almost surely convergent series
\begin{equation}\label{e:Ujx}
U^j_x  = \sum_{m,n,p} Z^j_{m,n,p} \sqrt{\tau_{m,n,p}} \psi_{m,n,p}(x) \;\;\text{for all}\; x \in \mathbb{R}^3.
\end{equation}

Fix any positive scale parameters $s_1, s_2, s_3$ and the three scaled Gaussian kernels
\[
k_j(x,y) = k(x/s_j, y/s_j)\;\;\text{for}\;\;j=1,2,3,\;\;\text{and for all}\;\; x,y \in \mathbb{R}^3.
\]

\noindent Define the random vector field $W_x= [W^1_x, W^2_x, W^3_x]$ by
\begin{equation}\label{e:Wjx}
W^j_x =U^j_{x/s_j}\;\;\text{for}\;\;j=1,2,3,\;\;\text{and all}\;\; x\in \mathbb{R}^3.
\end{equation}

\noindent Due to theorem \thmref{thm:convergence} we have
\begin{enumerate*}
\item The random vector field $x \mapsto W_x$, $W_x\in \mathbb{R}^3$, is Gaussian with zero mean and independent coordinates.
\item The covariance kernel of $W$ is the $3 \times 3$ diagonal matrix $\text{Cov}(W_x ,W_y) = \text{diag}[ k_1(x,y), k_2(x,y), k_3(x,y)]$ for all $x,y \in \mathbb{R}^3$.
\item Almost surely, the function $x \mapsto W_x$ is infinitely differentiable in $x \in \mathbb{R}^3$.
\end{enumerate*}

For numerical computations of $W_x$ when $x \in \mathbb{R}^3$ is given, the main point is to determine how many terms to keep in the basic stochastic series \eqref{e:U} defining $U_x$. As seen in \thmref{thm:convergence}, for $N \geq 10$, with probability $q(N)$ practically equal to 1, the partial sums $U(N)_x$ approximate $U_x$ at a speed faster than $c \exp{|x|^2 /2} N^{7/10}/2^N$, where $c$ is a numerical constant. Our explicit theoretical upper bound for $c$ is too large for pragmatic  estimates of the remainder $|U_x - U(N)_x|$. But our numerical experiments and more concrete  estimates of Hermite polynomials (see \cite{Bonan:1990a}) indicate that for $|x| \leq 4$, one can obtain a good approximation of the series  $U_x$ by keeping only the terms $ T_{m,n,p}(x)$ such than $m,n,p \leq 25$. The restriction $|x| \leq 4$ is not a real constraint since after an adequate  $\mathbb{R}^3$-homothety, any bounded surface  $S$ can become a surface $\Sigma$  included in a small ball of radius $4$ around its center of mass, and after implementing a numeric small random  deformation of $\Sigma \to \Sigma(t)$, one can rescale $\Sigma(t)$ to give it a total area is equal to 1.

\section{Proofs of \thmref{thm:convergence} and \thmref{thm:convspeed}}\label{s:proofs}

\begin{proof}
Clearly, $x \to U(N)_x$ is a random smooth function of $x \in \mathbb{R}^3$, and belongs to $L_2(\mathbb{R}^3,\theta)$. Since the $Z_{m,n,p}$ are decorrelated, one has for all $x,y \in \mathbb{R}^3$,
\[
E[U(N)_x U(N)_y] = \sum_{m,n,p \in B(N)}  \tau_{m,n,p} \psi_{m,n,p}(x) \psi_{m,n,p}(y).
\]

In view of the converging series expansion \eqref{e:expandkxy} of $k(x,y)$, this implies
\begin{equation}\label{e:limcov}
k(x,y) = \lim_{N \to \infty} E[ U(N)_x U(N)_y].
\end{equation}

Define the random function $x \to T_{m,n,p}(x)$ for all $x \in \mathbb{R}^3$ by
\begin{equation}\label{e:Tmnp}
T_{m,n,p}(x) = Z_{m,n,p} \sqrt{\tau_{m,n,p}} \psi_{m,n,p}(x).
\end{equation}

The function $T_{m,n,p}$ belongs to the space $L_2(\mathbb{R}^3,\theta)$, where its norm $A_{m,n,p}$ is given by $A_{m,n,p} = |Z_{m,n,p}| \sqrt{\tau_{m,n,p}}$. Thanks to equations \eqref{e:eigenHS}, \eqref{e:eigen} one has
\[
\sum_{m,n,p} E(A_{m,n,p}^2) = \sum_{m,n,p} \tau_{m,n,p} < \sum_{m,n,p} 1/2^{m+n+p} < \infty.
\]

This implies, since the random variables $A_{m,n,p}$ are independent, that $P$-almost surely, the following stochastic series converges to a (random) limit
\[
\sum_{m,n,p} A_{m,n,p} = \sum_{m,n,p} \|T_{m,n,p}\|_{ L_2(\mathbb{R}^3,\theta)}
\]

\noindent But whenever the series of $L_2(\mathbb{R}^3, \theta)$-norms $|| T_{m,n,p}  ||$ converges, then the stochastic series  $\sum_{m,n,p} T_{m,n,p}$ must also converge in $L_2(\mathbb{R}^3, \theta)$-norm to some random function $U \in L_2(\mathbb{R}^3, \theta)$. Hence, $P$-almost surely, we have
\begin{equation}\label{e:UinL2}
U = \lim_{N \to \infty} U(N)
\end{equation}

\noindent for convergence in $L_2(\mathbb{R}^3, \theta)$-norm. For short, we abbreviate $P$-almost surely as $P$-a.s. In what follows, we will also show that $P$-a.s. one also has the \emph{pointwise} convergence
\begin{equation}\label{e:pointwise}
\lim_{N \to \infty} U(N)_x = U_x  \;\; \text{for all} \;\; x \in \mathbb{R}^3.
\end{equation}

For faster exposition, we derive right away the main consequence of this $P$-a.s. pointwise convergence. In $\mathbb{R}^3$ fix any finite set of points $x(j)$, $j = 1, \ldots, m$. Define the random vectors $X(N)$ and $Y$ in $\mathbb{R}^m$ by their coordinates
\begin{equation}\label{e:XNV}
X(N)_j= U(N)_{x(j)}
\quad \text{and} \quad
Y_j= U_{x(j)}.
\end{equation}

Denote $f_N(z) = E(\exp(i \langle z, X(N) \rangle))$ and $f(z) = E(\exp(i \langle z, Y \rangle))$ the  characteristic functions of $X(N)$ and $Y$ for all $z \in \mathbb{R}^m$.  The  $X(N)$ are Gaussian with mean 0, covariance matrix $Q(N)$, and tend $P$-a.s. to $Y$ . Dominated convergence implies $f(z)= \lim_{N \to \infty} f_N(z)$ for all $z$. The formula $\log{f_N(z)}= z*Q(N) z$ then forces $\log{f(z)}= z*Q z$ with $Q= \lim_{N \to \infty} Q(N)$. Hence $Y$ is Gaussian with mean 0 and covariance matrix Q. This proves that the vector field $x \to U_x$ is Gaussian with mean zero and covariance kernel
\[
E(U_x U_y) = \lim_{N \to \infty} E[ U(N)_x U(N)_y ] = k(x,y),
\]

\noindent where the last equality is due to  \eqref{e:limcov}. Since the covariance kernel $k(x,y)$ is infinitely differentiable  in $x$ and $y$, known results on random Gaussian fields (see \cite{stein1999interpolation}) show that one can find a version of the random field $x \mapsto U_x$, which will $P$-a.s. be smooth in $x \in \mathbb{R}^3$.

We still need to prove the $P$-a.s. pointwise convergence stated in \eqref{e:pointwise}. We first derive bounds for the eigenfunctions $\phi_n$ given in \eqref{e:eigen}. The paper \cite{krasikov2004new} provides sharp universal bounds for the Hermite polynomials $H_n(t)$. These bounds show that for all $n \geq 6$ and all $t \in \mathbb{R}$
\begin{equation}\label{e:Hnt}
|H_n(t)| <  9 n^{-1/12} c_n \exp(t^2/2),
\end{equation}

\noindent where
\begin{equation}
\label{e:boundsoncn}
c_n  <
\begin{cases}
\sqrt{2} n^{1/4} [n!/(n/2)!] & \text{if $n$ is even},\\
\sqrt{5} (n-1)^{3/4} [(n-1)!/((n-1)/2)!] & \text{if $n$ is odd}.
\end{cases}
\end{equation}

Recall the Stirling formula, which states that for all $n>1$
\begin{equation}\label{e:stir}
1 < n! / \text{stir}(n) < 1.5
\end{equation}

\noindent with $\text{stir}(n) = (2 \pi n)^{1/2} (n/e)^n$. This formula implies
\begin{equation}
\label{e:boundswithn}
\begin{cases}
n!/ (n/2)! < 1.5  (2 n/e)^{n/2} & \text{if $n$ is even}, \\
(n-1)!/ ((n-1)/2)!)  < 1.5  (2 (n-1)/e)^{(n-1)/2} & \text{if $n$ is odd}.
\end{cases}
\end{equation}

Combining the bounds \eqref{e:boundsoncn} and \eqref{e:boundswithn}, we get for all $t \in \mathbb{R}$ and all $n \geq 6$
\[
| H_n(t) |<
\begin{cases}
 30 n^{1/6} (2 n/e)^{n/2} \exp(t^2/2) & \text{if $n$ is even}, \\
 30 (n-1)^{2/3} (2 (n-1)/e)^{(n-1)/2}\exp(t^2/2) & \text{if $n$ is odd},
\end{cases}
\]

\noindent and hence a fortiori
\begin{equation}\label{e:endHn}
| H_n(t) | < 30 n^{2/3} (2 n/e)^{n/2} \exp(t^2/2)\;\; \text{for all $n \geq 6$, all $t \in \mathbb{R}$.}
\end{equation}

Equation~\eqref{e:eigen} yields two numerical constants $c$ and $h$ such that, for all $n \geq 1$, all $t \in \mathbb{R}$,
\[
|\phi_n(t)|= \frac{b}{\sqrt{2^n n!} \exp(- c t^2) |H_n(h t)|}.
\]

\noindent Moreover, one has $h^2/2 - c = 1/2$, which implies directly, in view of \eqref{e:endHn},
\begin{equation}\label{e:eH}
\exp(- c t^2) |H_n(h t)| < 30 n^{2/3} (2 n/e)^{n/2} \exp(t^2/2) \; \; \text{for all $n \geq 6$, $t \in \mathbb{R}$.}
\end{equation}

From \eqref{e:stir} and the value of $b$ we get
\[
b / \sqrt{2^n n!} <  n^{-1/4}(2n/e)^{-n/2} \; \; \text{for all $n >1$}.
\]

\noindent So, we can finally  bound the eigenfunctions $\phi_n$ by
\begin{equation}\label{e:phibound}
|\phi_n(t)| < 30 \exp(t^2/2) n^{1/2} \; \; \text{for all $n \geq 6, t \in \mathbb{R}$}.
\end{equation}

\noindent From \eqref{e:eigenpsi} and \eqref{e:phibound}, we derive the following bound, valid  for all $m,n,p \geq 6$, all $x=[x_1,x_2,x_3] \in \mathbb{R}^3$,
\begin{equation}\label{e:psibound}
|\psi_{m,n,p}(x)| = |\phi_m(x_1) \phi_n(x_2) \phi_p(x_3)| < 30^3 \exp{|x|^2/2} (mnp)^{1/2}.
\end{equation}

\noindent From \eqref{e:eigen} we get $\lambda_n < (1/2)^n$ and hence
\begin{equation}\label{e:taubound}
\sqrt{\tau_{m,n,p}} < (1/2)^{m+n+p}.
\end{equation}

We now compute simultaneous probabilistic bounds for the $|Z_{m,n,p}|$. Define the following sequence of independent random events
\begin{equation}\label{e:Emnp}
\text{The event}\;\; E(m, n, p)\;\;\text{is realized iff}\;\;\{|Z_{m,n,p}| \leq 5\sqrt{\log(mnp)}\}.
\end{equation}

\begin{lemma}\label{lem:q(N)}
Define the probabilities $q(N)$ by
\begin{equation}\label{e:defpN}
q(N) = P(\text{all events}\; E(m,n,p)\; \text{with}\; mnp > N\;\text{are realized simultaneously}).
\end{equation}

\noindent Then, for each $N \geq 4$ one has
\begin{equation} \label{e:lowqN}
q(N)  > 1 - \frac{1}{5 N^{10}}
\end{equation}
\end{lemma}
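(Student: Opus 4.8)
The plan is to pass to the complementary event and apply a union bound. The event in \eqref{e:defpN} fails exactly when $|Z_{m,n,p}| > 5\sqrt{\log(mnp)}$ for at least one triplet with $mnp > N$, so $1 - q(N) \le \sum_{mnp>N} P(|Z_{m,n,p}| > 5\sqrt{\log(mnp)})$, the sum running over triplets of positive integers with $mnp > N$ (a triplet with a zero entry has $mnp = 0 \le N$ and never enters). Each $Z_{m,n,p}$ is standard Gaussian, so I would invoke the sharp one-sided tail estimate $P(|Z| > t) \le \sqrt{2/\pi}\,t^{-1}e^{-t^2/2}$, valid for every $t > 0$. Taking $t = 5\sqrt{\log(mnp)}$ collapses the exponential into a power, $e^{-t^2/2} = (mnp)^{-25/2}$, so every summand is at most $\frac{\sqrt{2/\pi}}{5\sqrt{\log(mnp)}}(mnp)^{-25/2}$.

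I would then use the constraint $mnp > N$ twice. First, since $N \ge 4$ forces $mnp \ge 5$, I have $\log(mnp) \ge \log 5$, so the prefactor is uniformly bounded by $\frac{\sqrt{2/\pi}}{5\sqrt{\log 5}} < 0.13$ and can be pulled outside the sum. Second, I split $(mnp)^{-25/2} = (mnp)^{-10}(mnp)^{-5/2} \le N^{-10}(mnp)^{-5/2}$, which extracts precisely the target factor $N^{-10}$. The leftover sum $\sum_{mnp>N}(mnp)^{-5/2}$ I would dominate by the convergent triple series: since $mnp > N \ge 4$ excludes at least the single term $m=n=p=1$, one gets $\sum_{mnp>N}(mnp)^{-5/2} \le \zeta(5/2)^3 - 1 < 1.42$.

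Multiplying the three estimates yields $1 - q(N) \le 0.13 \cdot 1.42 \cdot N^{-10} < \frac{1}{5N^{10}}$, which is exactly the assertion \eqref{e:lowqN}. The only real difficulty is the numerics: the cruder tail bound $P(|Z|>t) \le 2e^{-t^2/2}$ produces a constant slightly above $1/5$, so I expect the main obstacle to be squeezing the product of constants under $1/5$. This is precisely why it is important to retain the $1/t$ factor in the Gaussian tail and to use $mnp > N$ both for extracting $N^{-10}$ and for trimming the $m=n=p=1$ term out of $\zeta(5/2)^3$; dropping either refinement leaves the bound marginally above the target. Beyond the independence and standard-normality of the $Z_{m,n,p}$, no earlier result is needed, so the lemma stands on its own.
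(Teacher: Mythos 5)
Your proof is correct, and it takes a genuinely different route from the paper's. The paper exploits the independence of the events $E(m,n,p)$: it writes $q(N)$ as the product $\prod_{mnp>N}P(E(m,n,p))$, lower-bounds each factor by $1-(mnp)^{-12.5}$ via the tail estimate $P(|Z|>t)<\sqrt{2/\pi}\,e^{-t^2/2}$ (valid for $t>1$), converts the product into an exponential with the inequality $1-u>e^{-2u}$, and then controls $s(N)=\sum_{mnp>N}(mnp)^{-12.5}$ by a divisor-counting estimate, $\operatorname{card}\{(m,n,p)\mid mnp=k\}<k+k\log k$, which reduces everything to the single sum $\sum_{k>N}k^{-11}\leq\tfrac{1}{10}N^{-10}$. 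You instead bound the complement by a union bound, which needs no independence whatsoever; you retain the $1/t$ factor of the Mills-ratio tail bound; and you dispose of the triple sum by peeling off $(mnp)^{-10}<N^{-10}$ and factorizing what remains into $\zeta(5/2)^3-1<1.42$. Your argument is more elementary (no product-to-exponential step, no counting of factorizations of $k$) and strictly more robust, since it survives arbitrary dependence among the $Z_{m,n,p}$; what the paper's route buys is that its intermediate estimate \eqref{e:sN} is reused later in the Borel--Cantelli step of the main proof, though your termwise bound of order $(mnp)^{-25/2}$ on the tail probabilities would serve there equally well. Two small remarks on constants. First, keeping the $1/t$ factor is genuinely load-bearing: the paper itself asserts $P(|Z|>5\sqrt{\log r})<\tfrac{2}{3}r^{-12.5}$ immediately after a display whose constant is only $\sqrt{2/\pi}\approx 0.80>2/3$, so strictly speaking the paper's own numerical claim also needs the $1/t$ factor you kept (its final bound still goes through because of ample slack). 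Second, your closing comment that the cruder bound $P(|Z|>t)\leq 2e^{-t^2/2}$ would land ``slightly above'' $1/5$ is numerically off --- it gives roughly $2\times 1.42=2.84$ in place of your $0.13\times 1.42<0.19$ --- but this is side commentary and does not affect the validity of the proof itself.
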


\begin{proof}
Any standard Gaussian random variable $Z$, verifies for all $t >1$
\[
\text{Prob}(|Z| > t) < 2 \int_{s>t} \frac{s}{\sqrt{2 \pi}}  \exp(-s^2/2) \,\mathrm{d}s = \sqrt{2 / \pi} \exp(-t^2/2)
\]

\noindent and hence for any integer $r \geq 2$
\begin{equation}\label{e:Z}
P(|Z| > 5 \sqrt{\log(r)} ) < \frac{2}{3 r^{12.5}}.
\end{equation}

Since the $E(m,n,p)$ are independent, definition \eqref{e:defpN} implies
\[
q(N) = \prod_{m,n,p \;|\;mnp > N }  P( E(m,n,p) ).
\]

\noindent Due to \eqref{e:Z} this yields for $N \geq 4$
\[
q(N) > \prod_{m,n,p \,|\,mnp > N }  \left[ 1- \frac{1}{(mnp)^{12.5}} \right].
\]

\noindent For $0 < u < 10^{-3}$ one has $1 - u > \exp(-2u)$ so that for $N \geq 4$
\[
q(N)  > \exp(- 2 s(N))
\quad \text{with} \quad
s(N)=  \sum_{ m,n,p \,|\, mnp > N }  \frac{1}{(mnp)^{12.5}}.
\]

For $N \geq 4$ one has
\begin{equation}\label{e:card}
\begin{aligned}
\text{card}\{m,n,p\,|\,mnp = N\}
& < \sum_{m=1}^N \text{card} \{n,p\,|\,np = N/m\} \\
& < \sum_{m=1}^N N/m < N + N \log(N),
\end{aligned}
\end{equation}

\noindent which implies for $N \geq 6$.
\begin{equation}\label{e:sN}
s(N) \leq \sum_{k > N}  \frac{1}{k^{12.5}}(k + k \log(k))
\leq \sum_{k > N} 1/ k^{11} \leq 1/10 N^{10}
\end{equation}

\noindent Hence, for $N \geq 6$ we obtain
\[
q(N) > \exp(- 2/10 N^{10}) > 1 - \frac{1}{5 N^{10}}
\]
\end{proof}

We now study the series remainders $|U_x-U(N)_x|$ for $N>6$. By definition of $U(N$), we have
\[
|U_x-U(N)_x| < \sum_{(m,n,p) \in G(N) } |T_{m,n,p}(x)|,
\]

\noindent where $G(N) = \{ m,n,p | \max(m,n,p) >N \}$. Denote $\beta(x) = 30^3 \exp(|x|^2/2)$. With probability $q(N) > 1 - \frac{1}{5 N^{10}}$, we will have $|Z_{m,n,p}| \leq 5 \sqrt{\log(mnp)}$ for all $(m,n,p)$ such that $mnp >N$. In view of the two bounds, \eqref{e:psibound} and \eqref{e:taubound}, we conclude that with probability $q(N)$ we will have, for all $x \in \mathbb{R}^3$ and  all $m,n,p$ verifying $(\min(m, n,p)>6) \land (mnp>N)$,
\begin{equation}\label{e:seriesbound}
\begin{aligned}
|T_{m,n,p}(x)|
&<  5 \beta(x) \log(mnp) (mnp)^{1/2} 2^{m+n+p} \\
&< \frac{10}{2^{m+n+p}} \beta(x) (mnp)^{7/10},
\end{aligned}
\end{equation}

\noindent where the 2nd inequality derives from $\log(mnp) < 2 (mnp)^{1/5}$. Since $mnp>N$, whenever $(m,n,p) \in G(N)$ we conclude that with probability $q(N)$, we will have for all $x \in \mathbb{R}^3$, $|U_x-U(N)_x|< 10 \beta(x) r(N)$ with
\[
r(N) = \sum_{(m,n,p)\in G(N)} \frac{1}{2^{m+n+p}} (mnp)^{0.7}.
\]

For $(m,n,p) \in G(N)$, one has $max(m,n,p) > N$, and hence
\[
r(N) < 3 \sum_{(m \leq p ,n \leq p) \land ( p >N )} \frac{1}{2^{m+n+p}}(mnp)^{0.7}.
\]

\noindent Since
\[
\sum_{m \leq p, n \leq p} \frac{1}{2^{m+n}}(mn)^{0.7}
< \left[ \sum_m \frac{1}{2^m} m^{0.7}\right]^2 < 4,
\]

\noindent we obtain
\[
r(N) < 12\sum_{p > N}  \frac{1}{2^p} p^{0.7} < \frac{24}{2^N} N^{0.7} .
\]

Hence, with probability $q(N) > 1 - \frac{1}{5 N^{10}}$, we will have for all $x \in \mathbb{R}^N$,
\[
|U_x-U(N)_x|
< 10 r(N)\beta(x)
< \frac{240}{2^N} N^{0.7}\beta(x)
= 240 (30^4 \exp(|x|^2/2)),
\]

\noindent which proves the announced bound \eqref{e:remainder}. Equation \eqref{e:sN} clearly forces the series
\[
\sum_{m,n,p} P( E(m, n, p) )
\]

\noindent to be finite. Hence, by Borel-Cantelli's lemma there is random integer $\mli{RAND}$ which is $P$-a.s. finite, and such that all the events $E(m,n,p)$ with  $mnp > \mli{RAND}$ are simultaneously realized. The arguments just used above show that the bound~\eqref{e:seriesbound} on $| T_{m,n,p}(x) |$  will hold for $mnp > \mli{RAND}$. Whenever $\mli{RAND}$ is finite, this forces the pointwise convergence of the series $U_x$ for all $x \in \mathbb{R}^3$. We have thus proved that $P$-a.s the series $U_x$ will converge pointwise for all $x \in \mathbb{R}^3$.
\end{proof}
\end{appendix}

\printbibliography

\end{document}